\newtheorem{theorem}{Theorem}
\newtheorem{proposition}[theorem]{Proposition}
\newtheorem{definition}{Definition}
\newtheorem{example}{Example}
\newcommand{\BibTeX}{B\kern-.05em{\sc i\kern-.025em b}\kern-.08em\TeX}
\begin{document}


\begin{frontmatter}


\paperid{1253}


\title{Count-based Novelty Exploration in Classical Planning}


\author[A]{\fnms{Giacomo}~\snm{Rosa}\thanks{Corresponding Author. Email: rosag@student.unimelb.edu.au \\\\ This is the extended version of a paper accepted for publication at the \textit{27th European Conference on Artificial Intelligence (ECAI)}, Santiago de Compostela, Spain, 2024.}}
\author[A]{\fnms{Nir}~\snm{Lipovetzky}}

\address[A]{The University of Melbourne, Australia}


\begin{abstract}
Count-based exploration methods are widely employed to improve the exploratory behavior of learning agents over sequential decision problems. Meanwhile, Novelty search has achieved success in Classical Planning through recording of the first, but not successive, occurrences of tuples. In order to structure the exploration, however, the number of tuples considered needs to grow exponentially as the search progresses. We propose a new novelty technique, classical count-based novelty, which aims to explore the state space with a constant number of tuples, by leveraging the frequency of each tuple's appearance in a search tree. We then justify the mechanisms through which lower tuple counts lead the search towards novel tuples. We also introduce algorithmic contributions in the form of a trimmed open list that maintains a constant size by pruning nodes with bad novelty values. These techniques are shown to complement existing novelty heuristics when integrated in a classical solver, achieving competitive results in challenging benchmarks from recent International Planning Competitions. Moreover, adapting our solver as the frontend planner in dual configurations that utilize both memory and time thresholds demonstrates a significant increase in instance coverage, surpassing current state-of-the-art solvers.
\end{abstract}

\end{frontmatter}


\section{Introduction}

Research on \textit{width-based search methods} \citep{lipovetzky2012width} has had a significant impact in planning, over the past decade, through the introduction of search algorithms which rely on \textit{novelty} heuristics to induce an efficient exploration of the state-space. Novelty metrics achieve this by comparing a state's information content with that of states visited in the past. Width-based algorithms adopting Novelty alongside traditional heuristics have been central to improving state-of-the-art results in Classical Planning in recent years \citep{lipovetzky2021planning}, with search performance often being attributed to a balance between exploration and exploitation, where Novelty drives the exploration while traditional heuristics direct exploitation. This does not come without limitations, as \citet{lipovetzky2012width,lipovetzky2014width} show that the complexity of computing novelty metrics needed to solve planning problems is exponential in their cardinality. In practice, this causes novelty metrics of cardinality greater than 2 to be computationally unfeasible, limiting the technique's effectiveness in domains that would benefit from a higher cardinality. The cardinality is connected with a hardness measure for Classical Planning known as \textit{classical planning atomic width}. Multiple contributions have sought to address this limitation. \citet{lipovetzky2017best} introduce \textit{partition functions}, which subdivide planning problems into smaller sub-problems through the use of \textit{partitioning heuristics} to control the direction of search and increase the number of novel nodes. \citet{katz2017adapting} provide a definition of novelty of a state with respect to its heuristic estimate, providing multiple novelty measures which quantify the novelty degree of a state in terms of the number of novel and non-novel state facts. More recently, \citet{singh2021approximate} introduce \textit{approximate novelty}, which uses an approximate measurement of state novelty which is more time and memory efficient, proving capable of estimating novelty values of cardinality greater than 2 in practical scenarios.
Relating Novelty with other concepts, such as \textit{dominance pruning}, also constitutes an active area of research \citep{dold2024novelty,gross2020novel}. 

All mentioned techniques limit themselves to the original idea of measuring state information content through the occurrence of tuples in the search history. Instead, we propose a count-based measure of state novelty, \textit{classical count-based novelty}, which seeks to induce efficient exploration of the state space by making use of the additional information contained in the count of occurrences of tuples in the search history. This addresses shortcomings of the current Novelty framework (see \citep{lipovetzky2021planning}), which we refer to as \textit{width-novelty} to distinguish from our contributions in this paper. Our proposed count-based metric is not limited by width-novelty's binary classification of novel information, providing a more fine-tuned separation of the degree of novelty of a state and maintaining its informedness without the risk of exhausting novel nodes. A key motivation behind our study is thus to obtain a more general novelty framework that can maintain its efficacy across diverse sets of problems in Classical Planning, such as domains that require higher atomic widths.

In this regard, we note that count-based exploration techniques are well studied in relation to the exploration-exploitation problem in Multi-Arm Bandits and Reinforcement Learning (RL) settings. Such algorithms record state visitation to obtain an \textit{exploration bonus} used to guide the agent towards a more efficient exploration of the state-space, where algorithms such as MBIE-EB \citep{strehl2008analysis} achieve theoretical bounds on sample complexity in tabular settings. The focus of our research diverges from these methods, as we aim to discover a heuristic to control the order of state exploration in a Classical Planning context. Instead of state counts, we base our approach on the frequency of tuple events, inspired by work on width-novelty in the field of Classical Planning \citep{lipovetzky2012width, lipovetzky2014width, lipovetzky2017polynomial}. Still, our contributions provide a useful basis to connect count-based exploration across the two fields.

We also introduce algorithmic contributions in the form of a simple memory-efficient open list designed with count-based novelty in mind. Polynomial width-based planning algorithms prune nodes whose novelty cardinality is worse than a given bound to achieve a more efficient search \citep{lipovetzky2017polynomial}. Inspired on this idea, our contribution allows us to prune nodes with bad novelty values with a gradual and self-balancing cutoff without maintaining an explicit threshold value. 

Finally, we demonstrate the effectiveness of our proposed planning algorithms as fast but memory-intensive \textit{frontend} solvers through an effective use of a \textit{memory threshold}, which allows us to relate the progress of search to the amount of information we store from the history of a search. Many successful solvers such as FF, Probe or Dual-BFWS rely on such dual strategy \citep{hoffmann2001ff,lipovetzky2011searching,lipovetzky2017best}, with the frontend of such solvers playing a key role in their performance. The performance of our proposed frontend planner could improve such solvers even further.

Our contributions consist of a new novelty technique, with theoretical analysis tying it to existing width-novelty measures, trimmed open list, a planner, $BFNoS$, which integrates these techniques, and a procedure to adapt BFNoS as an effective frontend planner in a dual strategy. We structure our paper as follows. In section \ref{Count Based Novelty} we introduce classical count-based novelty metrics for Classical Planning and provide related theoretical findings. We then propose a novel open list implementation to exploit classical count-based novelty more efficiently. Section \ref{Experiments} is divided into two components: we first compare the performance of solvers incorporating our proposed techniques, and then show the impact of our frontend solver when used in conjunction with different time-and-memory thresholds and backend solvers, providing state-of-the-art performance.

\section{Background}
\paragraph{Classical Planning}
The classical planning model is defined as $S=\langle S, s_0, S_G, A, f \rangle$, where $S$ is a discrete finite state space, $s_0$ is the initial state, $S_G$ is the set of goal states, and $A(s)$ denotes the set of actions $a \in A$ that deterministically map one state $s$ into another $s'= f(a,s)$, where $A(s)$ is the set of actions applicable in $s$. We adopt a notation whereby, in a classical planning problem, a state is visited (generated) sequentially at each time-step $t$. Let \(s_t\in S\) denote the \(t^{th}\) visited (generated) state in a search problem. We use \(s_{0:t}\) to denote the sequence of $t+1$ states generated at time-steps $0, 1, ..., t$.
A solution to a classical planning model is given by a plan, a sequence of actions $a_{0}, ..., a_{x_m}$ that induces a state sequence $s_{0:x_{m+1}}$ such that $a_{x_i}\in A(s_{x_i})$, $s_{x_{i+1}}=f(a_{x_i},s_{x_i})$, and $s_{x_{m+1}}\in S_G$.

We use STRIPS planning language \citep{haslum2019introduction} to define a classical planning problem $P=\langle F,O,I,G \rangle$, where $F$ denotes the set of boolean variables, $O$ denotes the set of operators, $I\subseteq F$ is the set of atoms that fully describe the initial state, and $G\subseteq F$ is the set of atoms present in the goal state. An optimal plan consists of the shortest possible solution to a given problem $P$. In this research, we look at \textit{satisficing} planners, that is, planners which are not constrained to searching for optimal plans, but rather aim for computing good-quality plans fast.

\paragraph{Width-Based Search}
\textit{Best-First Width Search} (BFWS) \citep{lipovetzky2017polynomial} refers to a family of planners which adopt a \textit{greedy best-first search} algorithm, using a novelty measure as first heuristic. A greedy best first search planner is a planner which visits nodes in the order specified solely by an \textit{evaluation function $h$}, potentially breaking ties through the use of secondary heuristics. The main peculiarity of using a primary novelty heuristic comes from the fact that it is goal-unaware, thus prioritizing an efficient exploration of the state space over seeking states which are expected to be closer to the goal. The search is then directed to the goal through the use of secondary tie-breaking heuristics as well as \textit{partition functions}, that is, evaluation functions $h$ used to partition the set of states considered in the computation of novelty measures into disjoint subsets, ignoring occurrences of variables in states belonging to separate subsets.

\paragraph{Count-based exploration}
Count-based exploration methods have been studied to address the exploration-exploitation dilemma inherent in learning algorithms by allowing agents to prioritize actions that lead to states with uncertain or unexplored dynamics, thereby facilitating more effective learning of the environment's structure. This is often achieved by incorporating an exploration bonus added to the agent's reward upon visiting a state, encouraging the exploration of states with low visitation counts. Among the best-known examples is the UCB1 bandit algorithm \citep{auer2002finite}, which 
performs a near-optimal balancing of exploration and exploitation in the stateless multi-armed bandit problem. This is achieved by selecting actions, referred to as the arms of a bandit, which maximize an \textit{upper confidence bound}, the sum of the empirical average rewards $Q_t(i)$ of selecting arm $i$, and a confidence interval term $\sqrt{\frac{2\log N}{N(i)}}$, where $N(i)$ is the count of pulls of arm $i$, and $N$ is count of total arm pulls.

\section{Count Based Novelty} \label{Count Based Novelty}

\textit{Classical count-based novelty} operates over states that assign a value to a finite number of variables \(v\in V\) over finite and discrete domains. In problems defined via STRIPS, without loss of generality, $V = F$ are boolean variables. Let \(V\) be the set of all variables, and \(U^{(k)} = \{X \subseteq V \mid |X| = k\}\) the set of all \(k\)-element variable conjunctions. A tuple \(u \in U^{(k)}\), specifically \(u = \{v_1, v_2, \ldots, v_k\}\), represents a conjunction of \(k\) variables. Given a state \(s\) that assigns a boolean value to each variable in \(V\), the value of the tuple \(u\) in state \(s\), denoted \(s(u)\), is defined as the conjunction of the values of the $k$ variables in \(u\), \(s(u) = s(v_1) \land s(v_2) \land \cdots \land s(v_k)\), where \(s(v_i)\) is the value of variable \(v_i\) in state \(s\). We say \(s(u)\) is \textit{true} if all $v \in s(u)$ are \textit{true}, and tuple $u$ is \textit{true} in $s$ if $s(u)$ is \textit{true}. Let \(s_{0:t}(u)\) denote the sequence of values of tuple \(u\in U^{(k)}\) in state sequence \(s_{0:t}\), and let \(U^{+(k)}(s) \subseteq U^{(k)}\) denote the set of tuples $u$ in state $s$ where $s(u)$ is \textit{true}.

\begin{definition}[Classical count-based novelty] \label{def-count-based-novelty}
The count-based novelty \(c^U(s)\) of a newly generated state \(s\) at time-step \(t+1\) given a history of generated states \(s_{0:t}\) and set of variable conjunctions \(U=U^{(k)}\) for some tuple size $k$ is:

\[c^U(s_{t+1}):=\min_{u\in U^+(s_{t+1})}(N^{u}_{t}(s_{t+1}))\]

Where \(N^{u}_t(s_{t+1})\) counts the number of states $s_i \in s_{0:t}$ where $s_i(u)=s_{t+1}(u)$.
\end{definition}

That is, for each tuple $u$ that is \textit{true} in $s_{t+1}$, we count the number of states $s_i \in s_{0:t}$ where $s_i(u)$ is \textit{true}, and we select the minimum out of those counts. 

Following prior work on Novelty \citep{lipovetzky2017best}, we also define a version of count-based novelty which uses \textit{partition functions} to separate the search space into distinct sub-spaces.

\begin{definition}[Partitioned classical count-based novelty] \label{def-part-count-based-novelty}
\textit{The partitioned count-based novelty \(c^U(s)\) of a newly generated state \(s\) at time-step $t+1$ given partition functions \(h_1,...,h_m\) is:}

\[c^U_{h_1,...,h_m}(s_{t+1}):=\min_{u\in U^+(s_{t+1})}(N^{u}_{t;h_1,...,h_m}(s_{t+1}))\]

Where \(N^{u}_{t;h_1,...,h_m}(s_{t+1})\) counts the number of states $s_i \in \{s_{0:t} \mid h_1(s_i)=h_1(s_{t+1})\land...\land h_m(s_i)=h_m(s_{t+1})\}$ where $s_i(u)=s_{t+1}(u)$.
\end{definition}

In other terms, we are obtaining tuple counts relative to the partition of previously generated states where \(h_j(s_i)=h_j(s_{t+1})\) for all \(1\leq j\leq m\), as opposed to the full state history \(s_{0:t}\). It trivially follows that \(N^u_{t;h_1,...,h_m}(s_{t+1})\leq N^u_t(s_{t+1})\) and \(c^U_{h_1,...,h_m}(s_{t+1})\leq c^U(s_{t+1})\).

\subsection{Theoretical results} \label{Theoretical results}

In this section, we justify the notion that classical count-based novelty achieves an efficient exploration of the state space that benefits planner performance, and present the mechanisms through which this is achieved. Firstly, we focus on the exploratory aspect of our heuristic, by detailing how size-1-tuple counts can be leveraged to direct the search towards lesser explored areas of the state space. We do so by exploiting a Hamming distance measure of a state to all previously visited states, as it provides an intuitively appealing means of quantifying how different a newly visited state is to the solver's visitation history. By demonstrating that information on size-1-tuple counts leads to improved bounds with respect to the Hamming distance of newly visited states in Theorems~\ref{bound-1} to \ref{pc-bounds-3}, we highlight the extent to which classical count-based novelty identifies under-explored areas of the state space. This exploratory aspect alone, however, does not validate the heuristic's effectiveness, as it fails to reveal whether the novel information is beneficial to the search. We address this aspect in Theorems~\ref{prob-theorem-1} and \ref{prob-theorem-2} by using information on size-1-tuple counts and average Hamming distance of states to estimate the expected number of novel tuples. \citet{gross2020novel} show that novel tuples benefit search performance by indicating potential new paths towards the goal. Our results identify the two mechanisms through which classical count-based novelty increases the expectation of such novel tuples.

We define node $n_i=n_i(s_i)$ as referring to a state $s_i$, where the sequence $n_{0:t}$ corresponds to sequence $s_{0:t}$. The distinction between a node $n_i$ and its corresponding state $s_i$ lies in the equality operator: $n_i=n_j$ iff $i=j$, implying that $s_i=s_j$, whereas $s_i=s_j$ denotes the equality of all underlying variable values $v$ in $s_i$ and $s_j$. Crucially, throughout the entire section we assume that $U=U^{(1)}=V$, that is, we are only looking at counts over single-variable tuples. We thus simplify the tuple notation by denoting $s^i=s(v_i)$. Let $L=|s|=|V|$, and \textit{Hamming distance} $H(n, n_j)=H(n(s),n_j(s_j))=\sum_{i=0}^{L-1}1_{s^i\neq s_{j}^i}$. We then define \textit{normalized Hamming distance} as $\delta(n, n_j)=\frac{1}{L} (H(n, n_j))=\frac{1}{L}\sum_{i=0}^{L-1}1_{s^i\neq s_{j}^i}$, and the \textit{average normalized Hamming distance} of a node $n$ with respect to all nodes in $n_{0:t}$ as \(\alpha_{0:t}(n) = \frac{1}{W} \sum_{i=0; n_i \neq n}^{t} \delta(n, n_i)\) where $W=t$ if \(n \in n_{0:t}\) or $W=t+1$ otherwise, noting that in the first case we are skipping a node's comparison with itself.

Let the \textit{empirical count distribution} be $\mu_t^i(s)=\frac{N_t^{v_i}(s)}{t+1}$, and $\mu_t^{min}(s)=\min_{i \in V}(\mu_t^i(s))$, noting that the minimum is over the entire set of variables $V=U^{(1)}$ rather than the set of \textit{true} variables $U^{+(1)}$ in a state $s$ used in Definition~\ref{def-count-based-novelty} and \ref{def-part-count-based-novelty}, and $0\leq \mu_t^i(s) \leq 1$. We provide a justification of this change through Propositions~\ref{prop-1} and \ref{prop-2}, demonstrating a correspondence between empirical counts and Hamming distances over $U^{(1)}$ in binary vectors, and the same metrics over the set $U^{+(1)}$ in binary vectors that include negated variables. This allows us to align our results with a STRIPS representation that includes negated variables. For the set of $L$ variables $V$, we define \(s_{\text{neg}}\) for states \(s\) over \(V_{neg}=V \cup \{\neg v \mid v \in V\}\) such that \(s_{\text{neg}}(v) = s(v)\), \(s_{\text{neg}}(\neg v) = \neg s(v)\) $\text{for all } v_i \in V$. Since \(H(s, s') = |\{i \mid s(v_i) \neq s'(v_i)\}|\), we define \(H_{\text{true}}(s, s') = |\{i \mid s(v_i) \neq s'(v_i), s(v_i) = 1\}|\) and \(H_{\text{false}}(s, s') = |\{i \mid s(v_i) \neq s'(v_i), s(v_i) = 0\}|\), noting that \(H(s, s') = H_{\text{true}}(s, s') + H_{\text{false}}(s, s')\).

\begin{proposition} \label{prop-1}
The Hamming distance \(H(s, s')\) between states \(s\) and \(s'\) equals the true Hamming distance \(H_{\text{true}}(s_{\text{neg}}, s'_{\text{neg}})\), considering only variables in \(s_{\text{neg}}\) that are true.
\end{proposition}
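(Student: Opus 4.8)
The plan is to unfold the definitions and show that the map $s \mapsto s_{\text{neg}}$ converts each disagreement coordinate into exactly one ``true disagreement'' in the doubled variable space. First I would write out what $H_{\text{true}}(s_{\text{neg}}, s'_{\text{neg}})$ counts: by definition it is the number of variables $w \in V_{neg}$ such that $s_{\text{neg}}(w) \neq s'_{\text{neg}}(w)$ \emph{and} $s_{\text{neg}}(w) = 1$. The variables $w$ range over both the original $v \in V$ and their negations $\neg v$, so I would split the count into these two groups.

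Next I would analyze a single index $i$ and track its contribution. The key observation is that for the original variable $v_i$ we have $s_{\text{neg}}(v_i) = s(v_i)$ and $s_{\text{neg}}(\neg v_i) = \neg s(v_i)$, and similarly for $s'$. So whenever $s(v_i) \neq s'(v_i)$, the two coordinates $v_i$ and $\neg v_i$ \emph{both} disagree between $s_{\text{neg}}$ and $s'_{\text{neg}}$ (since negating both sides of an inequality of booleans preserves the inequality). Now the ``true'' filter selects exactly one of these two disagreeing coordinates: if $s(v_i)=1$ then $s_{\text{neg}}(v_i)=1$ is the true one and $s_{\text{neg}}(\neg v_i)=0$ is filtered out; if $s(v_i)=0$ then $s_{\text{neg}}(\neg v_i)=1$ is the true one and $s_{\text{neg}}(v_i)=0$ is filtered out. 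Either way, each index $i$ with $s(v_i)\neq s'(v_i)$ contributes exactly $1$ to $H_{\text{true}}(s_{\text{neg}}, s'_{\text{neg}})$, and each index with $s(v_i)=s'(v_i)$ contributes $0$ (both pairs of coordinates agree).

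I would then conclude by summing over all $i$: the total count $H_{\text{true}}(s_{\text{neg}}, s'_{\text{neg}})$ equals the number of indices $i$ with $s(v_i)\neq s'(v_i)$, which is precisely $H(s,s')$ by definition. A clean way to package the index-level argument is to note that $H_{\text{true}}(s_{\text{neg}}, s'_{\text{neg}}) = H_{\text{true}}(s,s') + H_{\text{false}}(s,s')$, where the first summand comes from the original-variable coordinates (selecting disagreements with $s(v_i)=1$) and the second comes from the negated-variable coordinates (selecting disagreements with $s(v_i)=0$, since there $s_{\text{neg}}(\neg v_i)=1$); then $H_{\text{true}}(s,s')+H_{\text{false}}(s,s')=H(s,s')$ by the decomposition already stated in the excerpt.

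This proof is essentially a bookkeeping exercise, so I do not anticipate a genuine obstacle; the only point requiring care is the bijective correspondence in the case analysis — making sure that for every disagreeing index, exactly one of the two associated coordinates in $V_{neg}$ survives the ``true'' filter, with no double-counting and no omissions. Being explicit that the two cases $s(v_i)=1$ and $s(v_i)=0$ are exhaustive and mutually exclusive for a disagreeing index is what guarantees the one-to-one contribution.
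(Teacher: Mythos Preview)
Your proposal is correct and matches the paper's approach exactly: the paper's proof is the one-line observation that the case split on $s(v_i)\in\{0,1\}$ yields $H_{\text{true}}(s_{\text{neg}}, s'_{\text{neg}}) = H_{\text{true}}(s,s') + H_{\text{false}}(s,s') = H(s,s')$, which is precisely the ``clean packaging'' you describe at the end. Your write-up simply supplies more of the bookkeeping detail behind that identity.
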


\begin{proof}
  Since for each variable $s(v_i)=1 \in s$ there are two variables $s_{neg}(v_i)=1 \in s_{neg}$ and $s_{neg}(\neg v_i)=0$, and for each variable $s(v_i)=0 \in s$ there are two variables $s_{neg}(\neg v_i)=1 \in s_{neg}$ and $s_{neg}(v_i)=0$, follows that \(H_{\text{true}}(s_{neg}, s_{neg}') = H_{\text{true}}(s, s') + H_{\text{false}}(s, s')=H(s, s')\)
\end{proof}

\begin{proposition} \label{prop-2}
The empirical count $N_t^{v_i}(s)$ for any value $s(v_i)$ corresponds to the empirical count $N_{t}^{v_x}(s_{neg})$, where $x=i$ if $s(v_i)=1$ and $x=j$ if $s(v_i)=0$ where $s_{neg}(v_j)\equiv s_{neg}(\neg v_i)$.
\end{proposition}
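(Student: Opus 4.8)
The plan is to argue by a case split on the boolean value $s(v_i)$, showing in each case that the set of history states counted by $N_t^{v_i}(s)$ in the original binary representation is identical to the set counted by the appropriate $N_t^{v_x}(s_{neg})$ taken over the true tuples $U^{+(1)}$ of the negated representation. Since both quantities are cardinalities of the very same underlying collection of history states, the counts coincide.

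First I would recall the pairing established in the proof of Proposition~\ref{prop-1}: the map $s\mapsto s_{neg}$ sends $s(v_i)=1$ to the pair $s_{neg}(v_i)=1,\ s_{neg}(\neg v_i)=0$, and sends $s(v_i)=0$ to the pair $s_{neg}(v_i)=0,\ s_{neg}(\neg v_i)=1$. The crucial point to make explicit is \emph{which} counting rule applies on each side: in the original representation $N_t^{v_i}(s)$ ranges over all of $U^{(1)}=V$ and counts history states $s_k$ with $s_k(v_i)=s(v_i)$ regardless of whether that value is $0$ or $1$, whereas in the negated representation the relevant count is taken over the true tuples $U^{+(1)}$ and so records only matches on variables that evaluate to true.

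I would then treat the two cases. If $s(v_i)=1$, set $x=i$; since $s_{neg}(v_i)=1$, the variable $v_i$ lies in $U^{+(1)}(s_{neg})$, and $N_t^{v_i}(s_{neg})$ counts exactly the history states $s_{k,neg}$ with $s_{k,neg}(v_i)=1$, which by the pairing are precisely the states $s_k$ with $s_k(v_i)=1=s(v_i)$, i.e.\ the set counted by $N_t^{v_i}(s)$. If $s(v_i)=0$, choose $v_j$ with $s_{neg}(v_j)\equiv s_{neg}(\neg v_i)$; now $s_{neg}(\neg v_i)=1$, so $v_j\in U^{+(1)}(s_{neg})$, and $N_t^{v_j}(s_{neg})$ counts the history states with $s_{k,neg}(\neg v_i)=1$, i.e.\ those with $s_k(v_i)=0=s(v_i)$, again the set counted by $N_t^{v_i}(s)$. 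Combining the two cases yields the claimed correspondence.

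The main obstacle I anticipate is notational rather than mathematical: the symbol $N_t$ is used for two superficially different counting rules (match-on-value over $V$ versus match-on-true over $U^{+}$), and the entire content of the proposition is that the negated encoding reconciles them. I would therefore spend most of the care on stating the two conventions cleanly and on verifying the set identity in each case, since the underlying combinatorics is simply the one-to-one variable pairing already exhibited in Proposition~\ref{prop-1}.
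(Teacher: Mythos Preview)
Your proposal is correct and follows essentially the same approach as the paper: a case split on the value $s(v_i)$, using the definitional pairing $s\mapsto s_{neg}$ to identify the history states being counted on each side. Your version is more explicit than the paper's terse two-line argument, in particular by spelling out the distinction between counting over $V$ and counting over $U^{+(1)}$, but the underlying reasoning is the same.
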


\begin{proof}
    From the definition of $s_{neg}$, for every variable $s_j(v_i)=0 \in s_{0:t}(v_i)$ we have that $s_{neg;j}(v_i)=0$ and $s_{neg;j}(\neg v_i)=1$. The proof for $s_j(v_i)=1$ case is symmetrical. Proposition~\ref{prop-2} follows.
\end{proof}

It then follows from Proposition~\ref{prop-2} that selecting the minimum count $c^{V_{neg}}(s_{neg;t+1})=\min_{v \in V}(N_{t}^{v}(s_{t+1}))$.

\begin{theorem}\label{bound-1}
The average normalized Hamming distance $\alpha_{0:t}(s)$ of a state $s$ to the $t+1$ states in history $s_{0:t}$ is upper bounded by:
\[\alpha_{0:t}(s)\leq 1-\mu_{t}^{min}(s)\]
\end{theorem}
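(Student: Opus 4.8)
The plan is to expand the definition of $\alpha_{0:t}(s)$ completely, re-express it in terms of the counts $N_t^{v_i}(s)$ by exchanging the order of the two summations, and then apply the elementary fact that an arithmetic mean is at least its smallest term. Since $s=s_{t+1}$ is freshly generated and hence not among $s_{0:t}$, the normalizing weight is $W=t+1$ and the sum ranges over all $t+1$ history states.

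First I would unfold $\delta$ and $H$ to write
\[\alpha_{0:t}(s) = \frac{1}{(t+1)L}\sum_{j=0}^{t}\sum_{i=0}^{L-1} 1_{s^i\neq s_j^i}.\]
The key step is to swap the order of summation so the inner sum runs over the history index $j$ for a fixed variable $i$: the quantity $\sum_{j=0}^{t} 1_{s^i\neq s_j^i}$ counts exactly the history states that \emph{disagree} with $s$ on variable $v_i$. Since $N_t^{v_i}(s)$ counts those that \emph{agree}, and there are $t+1$ states in total, this disagreement count equals $(t+1)-N_t^{v_i}(s)$.

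Substituting and dividing through by $t+1$ yields
\[\alpha_{0:t}(s) = \frac{1}{L}\sum_{i=0}^{L-1}\left(1-\mu_t^i(s)\right) = 1 - \frac{1}{L}\sum_{i=0}^{L-1}\mu_t^i(s),\]
i.e.\ $\alpha_{0:t}(s)$ is exactly one minus the arithmetic mean of the empirical counts $\mu_t^i(s)$ across the $L$ variables. The proof then concludes by noting that the mean dominates the minimum, $\frac{1}{L}\sum_{i=0}^{L-1}\mu_t^i(s)\geq \min_{i\in V}\mu_t^i(s)=\mu_t^{min}(s)$, which gives the claimed bound $\alpha_{0:t}(s)\leq 1-\mu_t^{min}(s)$.

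I expect no genuine obstacle here; the argument is a routine rearrangement. The only point demanding care is the bookkeeping around agreements versus disagreements---correctly identifying that $N_t^{v_i}$ tracks agreements, so the Hamming contribution (a disagreement) is its complement in $t+1$---together with keeping the normalization $W=t+1$ consistent with $s\notin s_{0:t}$. It is worth remarking that the intermediate identity is tight: $\alpha_{0:t}(s)$ equals one minus the mean, so the slack in the final inequality is precisely the gap between the mean and the minimum of the $\mu_t^i(s)$. This foreshadows why driving down $\mu_t^{min}(s)$ (favouring low counts) pushes a state farther, on average, from the visited history.
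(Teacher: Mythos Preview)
Your proposal is correct and follows essentially the same approach as the paper: expand $\alpha_{0:t}(s)$, swap the summation order, rewrite the per-variable disagreement count as $(t+1)-N_t^{v_i}(s)$ to obtain $\alpha_{0:t}(s)=\frac{1}{L}\sum_i(1-\mu_t^i(s))$, and then bound each summand by $1-\mu_t^{min}(s)$. Your framing of the final step as ``mean $\geq$ minimum'' is equivalent to the paper's termwise bound $\mu_t^i(s)\geq\mu_t^{min}(s)$, and your remark on tightness is a nice addition.
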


\begin{proof}
The average Hamming distance of $s(v_i)$ with respect to the value of variable $i$ in all states $s_j \in s_{0:t}$ is equivalent to 
\begin{align*}
    \begin{split}
    \frac{1}{t+1}\sum_{j=0}^{t}1_{s_{j}^i\neq s^i}
    &=1-\frac{1}{t+1}\sum_{j=0}^{t}1_{s_{j}^i=s^i}
    =1-\frac{1}{t+1}N_{t}^{v_i}(s)\\
    &=1-\mu_{t}^i(s)
    \end{split}
\end{align*}

Thus we have
\begin{equation} \label{eq1}
\begin{split}
    \alpha_{0:t}(s)&
    =\frac{1}{t+1}\sum_{j=0}^{t}\frac{1}{L}\sum_{i=0}^{L-1}1_{s_{j}^i\neq s^i}
    =\frac{1}{L}\sum_{i=0}^{L-1}\frac{1}{t+1}\sum_{j=0}^{t}1_{s_{j}^i\neq s^i}\\
    &=\frac{1}{L}\sum_{i=0}^{L-1}(1-\mu_{t}^{i}(s)) \leq \frac{1}{L}\sum_{i=0}^{L-1}(1-\mu_{t}^{min}(s)) \\
    &=1-\mu_{t}^{min}(s)
\end{split}
\end{equation}
\end{proof}

\paragraph{Parent-child average distance comparison} We provide a set of results on the average normalized Hamming distances of a child node with respect to its parent node, and the impact that the count-based novelty of a node has on this value. Incorporating constraints on the changes between parent and child nodes enables us to obtain much tighter bounds compared to Theorem~\ref{bound-1}, reflecting the parent-child dynamic that exists between expanded and generated nodes. Let $n^c$ and $n^p$ be the child and parent node respectively, where an action $a\in A$ is performed on $n^p$ to flip the value of $e$ variables, which we refer to as the \textit{effects}. Let $n^c$ be a newly generated node $n_t$.

\begin{theorem} \label{pc-bounds-1}
\textit{Lower and upper bounds for $\alpha_{0:t}(n^c)$ are given by:}

\[\alpha_{0:t}(n^p)-\frac{t-1}{t}\frac{e}{L} \leq \alpha_{0:t}(n^c) \leq \alpha_{0:t}(n^p)+\frac{t-1}{t}\frac{e}{L}\]
\end{theorem}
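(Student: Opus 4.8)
The plan is to exploit the single structural fact that distinguishes a child from its parent: the action $a$ flips exactly the $e$ effect variables, so $H(n^c,n^p)=e$ and hence $\delta(n^c,n^p)=\tfrac{e}{L}$. Since Hamming distance (and therefore normalized Hamming distance) is a metric, the triangle inequality gives, for every history node $n_j$,
\[
\bigl|\delta(n^c,n_j)-\delta(n^p,n_j)\bigr|\le\delta(n^c,n^p)=\tfrac{e}{L}.
\]
The whole statement should then fall out of averaging this pointwise bound, provided the two averages are first rewritten over a common index set.

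First I would pin down the index sets and weights. Because $n^c=n_t$ lies in $n_{0:t}$, the average $\alpha_{0:t}(n^c)$ carries weight $W=t$ and ranges over $j\in\{0,\dots,t-1\}$, skipping the self-comparison at $j=t$. Writing $n^p=n_p$ for the parent's index $p<t$, the node $n^p$ also lies in $n_{0:t}$, so $\alpha_{0:t}(n^p)$ likewise carries weight $W=t$ and ranges over $j\in\{0,\dots,t\}\setminus\{p\}$. Both averages are normalized by $\tfrac{1}{t}$, but over mismatched index sets: the child sum omits $j=t$ while the parent sum omits $j=p$.

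The key step is to isolate and cancel the two mismatched terms. In $t\,\alpha_{0:t}(n^c)$ the term at $j=p$ is $\delta(n^c,n_p)=\delta(n^c,n^p)=\tfrac{e}{L}$; in $t\,\alpha_{0:t}(n^p)$ the term at $j=t$ is $\delta(n^p,n_t)=\delta(n^p,n^c)=\tfrac{e}{L}$. These stray terms are equal, so they cancel on subtraction, leaving a difference taken over the common index set $\{0,\dots,t-1\}\setminus\{p\}$ of exactly $t-1$ elements:
\[
t\bigl(\alpha_{0:t}(n^c)-\alpha_{0:t}(n^p)\bigr)=\sum_{j\in\{0,\dots,t-1\}\setminus\{p\}}\bigl(\delta(n^c,n_j)-\delta(n^p,n_j)\bigr).
\]

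Finally I would apply the triangle-inequality bound termwise: each of the $t-1$ summands has absolute value at most $\tfrac{e}{L}$, so the sum is bounded in absolute value by $(t-1)\tfrac{e}{L}$, and dividing by $t$ yields $|\alpha_{0:t}(n^c)-\alpha_{0:t}(n^p)|\le\tfrac{t-1}{t}\tfrac{e}{L}$, i.e.\ precisely the claimed two-sided bound. The main obstacle is not analytic but combinatorial: reconciling the two averages' differing exclusion indices so that the surplus terms ($j=p$ for the child, $j=t$ for the parent) match up and cancel, exposing a clean common sum of size $t-1$. This careful accounting is exactly what produces the sharp constant $\tfrac{t-1}{t}\tfrac{e}{L}$ rather than a looser factor.
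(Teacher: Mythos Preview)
Your proof is correct and structurally parallel to the paper's: both bound $\delta(n^c,n_j)-\delta(n^p,n_j)$ termwise, then reconcile the two averages over a common index set of size $t{-}1$, with the stray terms $\delta(n^c,n^p)$ and $\delta(n^p,n^c)$ cancelling. The difference is in how the termwise bound is justified. The paper argues via extremal cases (``all $e$ effects match every history node'' for the lower bound, ``all $e$ effects are novel'' for the upper), deriving $\delta(n^c,n')\ge\delta(n^p,n')-\tfrac{e}{L}$ and its mirror directly from what flipping $e$ bits can do to a Hamming comparison. You instead invoke the triangle inequality for the Hamming metric, which yields both directions at once as $|\delta(n^c,n_j)-\delta(n^p,n_j)|\le\delta(n^c,n^p)=\tfrac{e}{L}$. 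Your route is more concise and makes the metric structure explicit; the paper's route is more combinatorial and makes it clearer \emph{when} the bounds are tight (it identifies the witnessing configurations), which feeds into the subsequent refinements in Theorems~\ref{pc-bounds-2} and~\ref{pc-bounds-3}. Your index-set bookkeeping---isolating $j=p$ from the child sum and $j=t$ from the parent sum and observing they are equal---is also cleaner than the paper's equivalent manipulation through equations~(\ref{eq2-3}) and~(\ref{eq2-4}).
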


\begin{proof}
In the lower bound, all $e$ effect variables change their corresponding valuation to match with all states in history except for the parent node, reducing Hamming distance to each state by 1 for each effect $e$. Parent and child states share all variable valuations except for the $e$ effects, which change valuation from parent to child node. This yields, for all cases where $n'\in n_{0:t}$, $n'\neq n^c$ and $n'\neq n^p$
\begin{equation} \label{eq2-1}
    \delta(n^c, n') \geq \frac{1}{L} (H(n^p, n') - e) = \delta(n^p, n') - \frac{e}{L}
\end{equation} 
\begin{equation} \label{eq2-2}
    \delta(n^c, n^p)=\frac{1}{L} (H(n^p, n^p) + e) = \frac{e}{L}
\end{equation}

We can redefine the average $\alpha_{0:t}(n^c)$ as 
\begin{equation} \label{eq2-3}
    \alpha_{0:t}(n^c)=\frac{1}{t}\bigg[\sum_{i=0;n_i \notin \{n^p,n^c\}}^t\Big(\delta(n^c, n_i)\Big) + \delta(n^c,n^p)\bigg]
\end{equation}
Since we define that $n^c=n_{t}$:
\begin{equation} \label{eq2-4}
    \alpha_{0:t}(n^p)=\frac{1}{t}\bigg[\sum_{i=0;n_i \notin n^p}^{t-1}\Big(\delta(n^p,n_i)\Big)+\delta(n^c,n^p)\bigg]
\end{equation}
Substituting (\ref{eq2-1}) and (\ref{eq2-2}) into (\ref{eq2-3}), noting that $\sum_{i=0;n_i \notin \{n^p,n^c\}}^t(\frac{e}{L})=(t-1)\frac{e}{L}$, and then substituting (\ref{eq2-4}) yields Theorem~\ref{pc-bounds-1}.

For the upper bound, we note that it is symmetrical in that in the upper bound all effects \textit{e} are novel, that is, their variable valuation in $n^c$ has never been observed in $n_{0:t-1}$. Thus we get $\delta(n^c,n')\leq\delta(n^p,n')+\frac{e}{L}$. Following the same procedure yields the upper bound.
\end{proof}

\begin{theorem}\label{pc-bounds-2}
\textit{Given a minimum empirical count distribution $\mu = \mu_{t-1}^{min}(n^c)$, the upper bound $\alpha_{0:t}(n^c)$ with respect to $\mu$ is given by:}
\[
\alpha_{0:t}(n^c)\leq\alpha_{0:t}(n^p)+\frac{t-1}{t}\frac{e-2e\mu}{L}
\]
\end{theorem}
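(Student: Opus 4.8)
The plan is to build directly on the decomposition already established inside the proof of Theorem~\ref{pc-bounds-1}. There, after the two copies of $\delta(n^c,n^p)$ cancel, the quantity of interest collapses to an average of per-state Hamming differences,
\[
\alpha_{0:t}(n^c) - \alpha_{0:t}(n^p) = \frac{1}{t}\sum_{\substack{i=0\\ n_i\notin\{n^p,n^c\}}}^{t}\big[\delta(n^c,n_i) - \delta(n^p,n_i)\big],
\]
a sum over the $t-1$ history nodes in $n_{0:t-1}\setminus\{n^p\}$. The whole improvement over Theorem~\ref{pc-bounds-1} will come from replacing its crude per-state estimate $\delta(n^c,n_i)-\delta(n^p,n_i)\le e/L$ by an aggregate estimate that exploits the minimum count $\mu$.

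First I would reduce each per-state Hamming difference to a sum of per-variable contributions. Since $n^c$ and $n^p$ agree on every non-effect variable, those variables contribute identically to $H(n^c,n_i)$ and $H(n^p,n_i)$ and cancel; only the $e$ flipped effect variables survive. For a single effect variable $k$ and a history node $n_i$, booleanity and the flip force exactly one of two cases: either $n_i$ agrees with the child value on $k$ (a $-1$ contribution to the Hamming difference, since it then disagrees with the parent) or it agrees with the parent value (a $+1$ contribution). Summing over the $t-1$ comparison nodes, the net contribution of effect $k$ is $(t-1) - 2m_c^k$, where $m_c^k$ denotes the number of comparison nodes agreeing with $n^c$ on variable $k$.

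Next I would tie $m_c^k$ back to the empirical count. The crucial observation is that the parent node disagrees with the child on every effect variable, so it is never one of the states matching $n^c$ on $k$; hence $N_{t-1}^{v_k}(n^c)=m_c^k$ and $\mu_{t-1}^k(n^c)=m_c^k/t$. Because $\mu=\mu_{t-1}^{min}(n^c)$ is a minimum taken over all variables, we get $m_c^k \ge \mu t \ge \mu(t-1)$ for every effect $k$. Substituting $m_c^k\ge\mu(t-1)$ bounds each effect's net contribution by $(t-1)(1-2\mu)$; summing over the $e$ effects, scaling by $1/(tL)$, and adding $\alpha_{0:t}(n^p)$ then yields the stated bound $\alpha_{0:t}(n^p)+\frac{t-1}{t}\frac{e-2e\mu}{L}$.

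The main obstacle is the bookkeeping in this last step: one must track carefully which nodes lie in the comparison set $n_{0:t-1}\setminus\{n^p\}$ as opposed to the count window $s_{0:t-1}$ defining $N_{t-1}^{v_k}$, and argue that the parent's presence in the count window does \emph{not} inflate $m_c^k$ precisely because it disagrees with the child on every effect. A secondary subtlety is that the honest per-effect bound is actually $m_c^k\ge\mu t$, which is slightly stronger than the $\mu(t-1)$ used above; the theorem is therefore stated with a small amount of slack, and I would simply note that the weaker estimate suffices. Since the refinement is inherently an aggregate statement about matching counts rather than a bound that holds state-by-state, the argument proceeds through this summed-over-effects accounting rather than through the single-state inequality employed in Theorem~\ref{pc-bounds-1}.
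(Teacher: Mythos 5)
Your proof is correct, and it lives inside the same skeleton as the paper's: both start from the Theorem~\ref{pc-bounds-1} decomposition in which the two $\delta(n^c,n^p)$ terms cancel, and both reduce the bound to counting how many of the $t-1$ comparison nodes match the child rather than the parent on the effect variables. The genuine difference is how that count is handled. The paper's proof (in both its main-text and appendix versions) is an extremal-configuration argument: it asserts that ``the upper bound occurs when all effects have occurrence equal to $\mu$,'' i.e.\ it evaluates the sum at one worst-case configuration in which all $e$ effects simultaneously match the child in exactly $\mu(t-1)$ comparison nodes, without proving that this configuration is in fact extremal. Your per-variable accounting supplies exactly that missing justification: writing each effect's net contribution as $(t-1)-2m_c^k$ makes the objective linear in the per-variable match counts $m_c^k$, so the maximum subject to $m_c^k\geq\mu(t-1)$ is attained with every count at its floor, irrespective of whether the effects co-occur in the same history nodes. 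Your observation that $N_{t-1}^{v_k}(n^c)=m_c^k$ precisely because the parent disagrees with the child on every effect variable is also the bookkeeping step the paper glosses over, and your remark about slack is accurate: since $\mu_{t-1}^k(n^c)=m_c^k/t$ normalizes by the $t$ states of the count window rather than the $t-1$ comparison nodes, the honest constraint is $m_c^k\geq\mu t$, which would yield a slightly tighter bound than the stated $\frac{t-1}{t}\frac{e-2e\mu}{L}$; the weaker estimate $m_c^k\geq\mu(t-1)$ that you fall back on is all that is needed to recover the theorem as written. In short, you prove the same bound by the same counting insight, but your version is rigorous where the paper's is heuristic, and it buys an explicit (if minor) sharpening of the constant.
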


\begin{proof}
Since $\mu$ is the minimum feature occurrence, acting as a constraint, upper bound occurs when all effects $e$ have occurrence equal to $\mu$, that is, the minimum possible occurrence they are allowed to have. Thus, for $t-1$ nodes $n'\in n_{0:t-1}$, $n'\neq n^p$, we have that $\delta(n^c, n')=\frac{1}{L} (H(n^p, n') + e)$ a total of $(1-\mu)\cdot (t-1)$ times, and $\delta(n^c, n')=\frac{1}{L} (H(n^p, n') - e)$ a total of $\mu\cdot (t-1)$ times.
Proof follows from derivation in Theorem~\ref{pc-bounds-1}.
\end{proof}

\begin{theorem}\label{pc-bounds-3}
\textit{Lower bound for $\alpha_{0:t}(n^c)$ when $\mu_{t-1}^{min}(n^c)=0$ is given by:}
\[
\alpha_{0:t}(n^c) \geq \alpha_{0:t}(n^p)-\frac{t-1}{t}\frac{e-2}{L}
\]
\begin{proof}
In the lower bound, one effect is novel, and $e-1$ effects match all previous history except $n^p$. Thus \(\delta(n^c, n') = \frac{1}{L} (H(n^p, n') - (e-1) + 1)\). Proof follows from derivation in Theorem~\ref{pc-bounds-1}.
\end{proof}
\end{theorem}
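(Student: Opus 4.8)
The plan is to reduce the statement to the decomposition of $\alpha_{0:t}(n^c)$ already established in the proof of Theorem~\ref{pc-bounds-1}, by supplying a per-state lower bound on $\delta(n^c, n')$ that reflects the hypothesis $\mu_{t-1}^{min}(n^c)=0$. The crux is to determine how many of the $e$ effects can simultaneously \emph{reduce} the Hamming distance to an arbitrary earlier node once that constraint is imposed.

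First I would argue that $\mu_{t-1}^{min}(n^c)=0$ forces at least one of the $e$ \emph{effect} variables to be novel in $n^c$, rather than a non-effect variable. The key observation is that the parent $n^p$ is generated strictly before $n^c=n_t$, so $n^p \in n_{0:t-1}$; consequently any variable $v$ on which $n^c$ and $n^p$ agree (that is, any non-effect variable) already has its value $n^c(v)=n^p(v)$ recorded in the history, giving $N_{t-1}^{v}(n^c)\geq 1$. Hence the only variables whose count can equal zero are effect variables, and $\mu_{t-1}^{min}(n^c)=0$ implies at least one effect is novel. This step is glossed over in the sketch and deserves explicit justification.

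Next I would translate this into a per-state bound. For any $n' \in n_{0:t-1}$ with $n' \neq n^p$, each effect variable contributes $\pm 1$ to $H(n^c, n')$ relative to $H(n^p, n')$, since a flip either newly matches or newly mismatches $n'$. A novel effect mismatches every earlier state, so it necessarily contributes $+1$; the remaining $e-1$ effects can each contribute at best $-1$. This yields $H(n^c, n') \geq H(n^p, n') - (e-1) + 1$, i.e. $\delta(n^c, n') \geq \delta(n^p, n') - \frac{e-2}{L}$, matching the expression in the sketch. The extremal (lower-bound) configuration is thus the one where exactly one effect is novel and the other $e-1$ match the whole history except $n^p$.

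Finally I would substitute this per-state inequality into the decomposition in equations~(\ref{eq2-3}) and~(\ref{eq2-4}). There are $t-1$ nodes in the sum (all of $n_{0:t}$ except $n^p$ and $n^c$), each losing at most $\frac{e-2}{L}$, while the parent term $\delta(n^c, n^p)=\frac{e}{L}$ is common to both $\alpha_{0:t}(n^c)$ and $\alpha_{0:t}(n^p)$ and cancels. Dividing by $t$ produces the factor $\frac{t-1}{t}\frac{e-2}{L}$ and the claimed bound. I expect the main obstacle to be the first step, rigorously ruling out a non-effect variable as the zero-count witness, since the remaining algebra is a direct reuse of Theorem~\ref{pc-bounds-1}; care is also needed to confirm that the forced novel effect \emph{reverses} sign (from the ideal $-1$ to $+1$) rather than merely failing to help, which is precisely what produces $e-2$ in place of $e-1$.
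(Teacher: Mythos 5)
Your proof is correct and takes essentially the same route as the paper's: the same per-state bound $\delta(n^c,n') \geq \delta(n^p,n') - \frac{e-2}{L}$ (one forced $+1$ from the novel effect, at best $-1$ from each of the other $e-1$ effects) substituted into the decomposition and cancellation argument of Theorem~\ref{pc-bounds-1}. Your opening step — ruling out a non-effect variable as the zero-count witness because $n^p \in n_{0:t-1}$ guarantees $N_{t-1}^{v}(n^c)\geq 1$ for every variable on which parent and child agree — is exactly the justification the paper's phrase ``one effect is constrained to be novel'' leaves implicit, and making it explicit strengthens the argument.
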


A comparison of the bounds in Theorem~\ref{pc-bounds-1} with those in Theorems~\ref{pc-bounds-2} and \ref{pc-bounds-3} demonstrates the relation between novelty count and Hamming distance through improved bounds, in terms of changes in the average distances from parent to child node, for nodes with a low empirical count $N$, which acts through the empirical count distribution $\mu$. The upper bound in Theorem~\ref{pc-bounds-2} details the main improvement, signalling greater potential of the child node being located in newer areas of the state space. Theorem~\ref{pc-bounds-3} is a notable special case for novel variable valuations never encountered before, which guarantees an improvement of the lower bound through the novel information that could not have already been observed in the parent node. Through the recursive nature of Theorems~\ref{pc-bounds-1} to \ref{pc-bounds-3}, we also conclude that paths consisting of low count-based novelty nodes are more likely to exhibit rapidly increasing average Hamming distances, thus facilitating a quicker exploration of novel state spaces. We cannot establish a tighter lower bound in Theorem~\ref{pc-bounds-2} because the least common feature might not be an effect, however modifying count-based novelty metrics to consider effect occurrences could overcome this limitation. Still, greater Hamming distances alone fail to explain how count-based novelty benefits search efficiency. We provide Theorems~\ref{prob-theorem-1} and \ref{prob-theorem-2} to tie our results to prior theoretical contributions on novelty-based search (see \citep{dold2024novelty,gross2020novel,lipovetzky2014width}) through an analysis of the expected count of novel tuples of size \textit{k} (\textit{k}-tuples).

\paragraph{Estimating novel \textit{k}-tuples}  
Let history $s_{0:t}$ represent $t+1$ independent and uniformly distributed binary vectors of size $L$. A tuple is novel if its valuation in $s = s_{t+1}$ was not observed in any state in history $s_{0:t}$.

\begin{theorem} \label{prob-theorem-1}
The expected number of novel tuples of size $k$ found in $s=s_{t+1}$ given search history $s_{0:t}$ is given by:
\begin{equation} \label{prob1}
    \binom{L}{k}\Big[1-(1-\alpha_{0:t}(s))^k\Big]^{t+1}
\end{equation}
\end{theorem}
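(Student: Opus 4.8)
The plan is to compute the expectation by linearity over all $\binom{L}{k}$ candidate $k$-tuples, reducing the problem to a single per-tuple novelty probability. First I would fix the newly generated state $s = s_{t+1}$, set $\alpha = \alpha_{0:t}(s)$, and read the hypothesis ``independent and uniformly distributed binary vectors'' as specifying the disagreement model between $s$ and the history: each bit comparison $s_j^i \neq s^i$ is an independent Bernoulli event whose marginal disagreement probability equals the average normalized Hamming distance $\alpha$. This is exactly the interpretation under which $\alpha$ is the expected per-position disagreement rate, consistent with Theorem~\ref{bound-1}'s identification of $1 - \mu_t^i(s)$ with the per-variable average Hamming distance.

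Writing $N$ for the number of novel $k$-tuples in $s$, I would express $N = \sum_{u \in U^{(k)}} \mathbf{1}[u \text{ is novel}]$ and apply linearity of expectation, so that $E[N] = \sum_{u \in U^{(k)}} P(u \text{ is novel})$. Since $|U^{(k)}| = \binom{L}{k}$ and, by the exchangeability of the model, every tuple carries the same novelty probability $p$ (it does not matter which $k$ positions it involves), this collapses to $E[N] = \binom{L}{k}\, p$, leaving only the evaluation of $p$.

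To compute $p$ for a fixed tuple $u$ of size $k$, I would first compute the probability that a single history state $s_j$ agrees with $s$ on $u$, i.e.\ $s_j(u) = s(u)$. By independence across the $k$ bit positions of $u$, each agreeing with probability $1 - \alpha$, this is $(1-\alpha)^k$; hence $s_j$ disagrees with $s$ on $u$ with probability $1 - (1-\alpha)^k$. The tuple $u$ is novel exactly when all $t+1$ history states $s_0, \ldots, s_t$ disagree with $s$ on $u$, so invoking independence across history states gives $p = [\,1 - (1-\alpha)^k\,]^{t+1}$. Substituting yields $\binom{L}{k}\big[1 - (1-\alpha_{0:t}(s))^k\big]^{t+1}$, as claimed.

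The main obstacle is not the combinatorics but pinning down and justifying the probabilistic model: the derivation hinges on (i) treating $\alpha_{0:t}(s)$ as a homogeneous per-bit disagreement probability rather than a realised average, and (ii) the two independence assumptions, namely across the $k$ positions within a tuple and across the $t+1$ history states. Both follow from reading ``independent and uniformly distributed'' at the level of individual bit comparisons, and I would state this modelling convention explicitly before the computation so that the product forms $(1-\alpha)^k$ and $[\,\cdot\,]^{t+1}$ are unambiguous.
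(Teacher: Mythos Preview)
Your proposal is correct and follows essentially the same route as the paper: identify $\alpha_{0:t}(s)$ with the per-bit disagreement probability, use independence across the $k$ positions to get $1-(1-\alpha)^k$ for a single history state, raise to the $t{+}1$ across independent history states, and multiply by $\binom{L}{k}$ via linearity of expectation. If anything, you are more explicit than the paper about the linearity-of-expectation step and the two independence assumptions, and you correctly describe the $[\,\cdot\,]^{t+1}$ step as an intersection over all history states (the paper loosely calls it a ``union'').
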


\begin{proof}
    From equation (\ref{eq1}) we can obtain $\alpha_{0:t}(s)=\frac{1}{(t+1)\cdot L}\sum_{j=0}^{t}\sum_{i=0}^{L-1}1_{s_{j}^i\neq s_{}^i}=\mathbb{E}_{s_j \in s_{0:t}, v_i \in V}[\mathbf{1}_{\{s_j(v_i) \neq s(v_i)\}}]=P(s_j(v_i) \neq s(v_i) \text{ for some } j, i)$. Thus, the probability that it has the same value becomes $1-\alpha_{0:t}(s)$, and for a tuple of size $k$, the probability that any of its constituent variable values is different in $s_j$ than in $s$ is $1-(1-\alpha_{0:t}(s))^k$. Calculating the union for a tuple over the full history and multiplying by the number of possible tuples of size $k$ yields the expectation in (\ref{prob1}).
\end{proof}

\begin{theorem} \label{prob-theorem-2}
 The expected number of novel tuples of size $k$ found in state $s=s_{t+1}$ given information on occurrence count $N=N_t^v(s)$ for some variable $v\in V$ and search history $s_{0:t}$ is given by:
\[
   \binom{L-1}{k}\Big[1-(1-\beta_{0:t}(s))^k\Big]^{t+1} + \binom{L-1}{k-1}\Big[1-(1-\beta_{0:t}(s))^{k-1}\Big]^N
\]
where $\beta_{0:t}(s)$ represents the average normalized Hamming distance after discounting the contribution of variable $v$:
\[
    \beta_{0:t}(s)=\frac{\alpha_{0:t}(s)\cdot L-(1-\frac{N}{t+1})}{L-1}
\]
\end{theorem}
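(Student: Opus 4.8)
The plan is to refine the argument of Theorem~\ref{prob-theorem-1} by conditioning on the single-variable count $N=N_t^v(s)$, and to do so I would partition the $\binom{L}{k}$ tuples of size $k$ into the $\binom{L-1}{k}$ tuples that do \emph{not} contain the distinguished variable $v$ and the $\binom{L-1}{k-1}$ tuples that \emph{do} contain $v$. The two summands in the statement correspond exactly to these two classes, so the strategy is to compute the expected number of novel tuples within each class separately and then add them.

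First I would establish that $\beta_{0:t}(s)$ is the correct per-variable mismatch probability once $v$ is removed from consideration. Recall from the proof of Theorem~\ref{prob-theorem-1} that $\alpha_{0:t}(s)$ is the average over all $L$ variables of the mismatch probability $P(s_j(v_i)\neq s(v_i))$, and that the mismatch probability for the specific variable $v$ is $1-\mu_t^v(s)=1-\tfrac{N}{t+1}$. Hence the total mismatch mass carried by the remaining $L-1$ variables is $\alpha_{0:t}(s)\cdot L-(1-\tfrac{N}{t+1})$, and dividing by $L-1$ recovers exactly the definition of $\beta_{0:t}(s)$. For any tuple avoiding $v$, I can therefore repeat the derivation of Theorem~\ref{prob-theorem-1} on these $L-1$ variables with $\beta_{0:t}(s)$ in place of $\alpha_{0:t}(s)$: the probability a fixed history state matches the tuple is $(1-\beta_{0:t}(s))^k$, so the probability it is novel across all $t+1$ states is $[1-(1-\beta_{0:t}(s))^k]^{t+1}$, and multiplying by the count $\binom{L-1}{k}$ of such tuples yields the first summand.

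The crux of the proof is the second class, the tuples containing $v$. The key observation is that the count $N$ restricts which history states can possibly match such a tuple: since a tuple matches a state $s_j$ only if \emph{every} constituent variable matches, and $v$ matches in exactly $N$ of the $t+1$ history states, the tuple automatically differs in the remaining $t+1-N$ states and can fail to be novel only through the $N$ states where $v$ agrees. Conditioned on such a state, the tuple matches iff its $k-1$ other variables all match, which has probability $(1-\beta_{0:t}(s))^{k-1}$ under the independence and uniformity assumption; equivalently the tuple differs with probability $1-(1-\beta_{0:t}(s))^{k-1}$. Requiring this across all $N$ relevant states gives novelty probability $[1-(1-\beta_{0:t}(s))^{k-1}]^{N}$, and multiplying by the $\binom{L-1}{k-1}$ tuples containing $v$ produces the second summand; summing the two classes completes the statement.

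I expect the main obstacle to be this second class, specifically being precise that the count $N$ \emph{simultaneously} reduces the effective number of dangerous history states from $t+1$ to $N$, which changes the exponent, and reduces the effective tuple size from $k$ to $k-1$ in the matching probability, since $v$'s match is already guaranteed in exactly those $N$ states. The supporting subtlety is that replacing $\alpha_{0:t}(s)$ by $\beta_{0:t}(s)$ for the remaining $k-1$ variables is legitimate precisely because, after removing $v$, those variables carry the residual mismatch mass computed above; this step leans on the independence of distinct variables within a state, inherited from the uniform and independent model assumed for the history.
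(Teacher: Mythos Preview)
Your proposal is correct and follows essentially the same approach as the paper: partitioning the $\binom{L}{k}$ tuples into those avoiding $v$ and those containing $v$, applying Theorem~\ref{prob-theorem-1} with $\beta_{0:t}(s)$ to the first class, and for the second class noting that only the $N$ history states where $v$ matches are relevant so the exponent drops to $N$ and the matching probability involves only the remaining $k-1$ variables. Your treatment is in fact more explicit than the paper's, particularly in justifying why $\beta_{0:t}(s)$ is the correct residual mismatch probability.
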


\begin{proof}
The left-hand side component of the addition is given by equation (\ref{prob1}) taken over tuples deriving from variables except for the variable $v$ whose empirical count $N$ we observe. The right-hand side component is given by the probability $1-(1-\beta_{0:t}(s))^{k-1}$ that, for some variable $x$ other than $v$ in a \textit{k}-tuple containing $v$ and in a state $s_j$ where $s_j(v)=s(v)$, $s_j(x)\neq s(x)$. Thus, the tuple's valuation in $s_j$ is different than in $s$.
Taking a union over $N$ states with matching $v$ valuation and multiplying by the total number of tuples in $s$ containing $v$ yields the right-hand side component. Summing the two expectations proves the theorem.
\end{proof}

\begin{figure}[tbh]
  \centering
  \begin{subfigure}{.472\columnwidth}  
    \includegraphics[width=\linewidth]{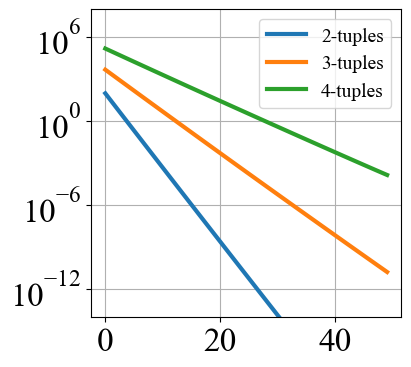}
    \caption{$\mathbb{E}[\text{\# novel \textit{k}-tuple}]$ vs. $N$}
    \label{fig:sub1}
  \end{subfigure}\hfill 
  \begin{subfigure}{.48\columnwidth}
    \includegraphics[width=\linewidth]{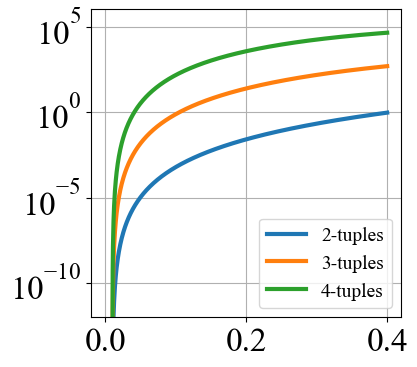}
    \caption{$\mathbb{E}[\text{\# novel \textit{k}-tuple}]$ vs. $\alpha(s)$}
    \label{fig:sub2}
  \end{subfigure}
  \vspace{5mm}
  \caption{$\mathbb{E}[\text{\# novel \textit{k}-tuple}]$ according to Theorem \ref{prob-theorem-2}. In (a) $N$ is a variable, and in (b) $\alpha(s)$ is a variable. Otherwise, parameters are set as $L=100$, $t=50000$, $N=5$, $\alpha(s)=0.3$. A realistic $\alpha(s)$ value was determined through simulation$^1$.
  }
  \label{fig:count_prob_analysis}
  \vspace{8mm}
\end{figure}

Theorem~\ref{prob-theorem-1} reveals that, without count information, the expectation decreases exponentially with increasing $t$, rendering the measure effective only for small history sizes. Conversely, Theorem~\ref{prob-theorem-2} introduces a component independent of $t$ and exponential in count number $N$, emphasizing the crucial role of the minimum count function in identifying states likely to contain novel tuples, necessary to fulfill new action preconditions.

We thus highlight the double role played by count-based novelty in inducing novel tuples, as shown in Figure~\ref{fig:count_prob_analysis}: directly, through the greater probability that a novel tuple may contain a variable with low occurrence $N$, as well as indirectly, through the effect that a low count $N$ induces a greater average Hamming distance compared to history $s_{0:t}$ (Theorems \ref{bound-1} to \ref{pc-bounds-3}), which in turn increases the expected number of novel tuples. We note that this analysis is na\"ive in that it does not account for the structure present in domains, which alters the probability of co-occurrence of variables. We seek to emphasize that based on our proposed theoretical analysis, it is not the greater Hamming distance alone that leads to meaningful beneficial performance, as shown by Theorem 7, but rather its effect in conjunction with low variable occurrence in the second term of Theorem 8.

\footnotetext[1]{We remove a randomly generated root binary vector of size $L$=100 from the front of an open list, generate 4 children nodes uniformly at random flipping 3 binary variables each, and append each into the open list. We repeated the process to create 10000 nodes, and measured the average Hamming distance of the last 100 nodes, yielding an average value of $\approx 35$.}

\section{Trimmed Open List} \label{Trimmed Open List}
Balancing the amount of memory occupied by low-rank nodes is a common strategy which allows for better ranked exploratory nodes to appear further down the search. Polynomial width-novelty planners prune nodes with novelty value greater than a threshold, as they are deemed not useful for the search. Similarly, count exploration methods generate many nodes with high counts, which are unlikely to ever be expanded. However, adopting a threshold as in the width-novelty case is unfeasible due to the granularity of the metric.

To address the challenge of high memory usage by poorly ranked nodes in the open list, we introduce the \textit{Trimmed Open List} (Alg. \ref{alg:cap}). Built on a binary heap, this open list limits its growth by pruning less promising nodes when it exceeds a predefined size limit $Z$. This pruning process involves randomly selecting a leaf node, comparing its heuristic value with a new node $n$ using the open list's comparison function, and then pruning or swapping nodes based on their heuristic values. A unique heapify-up operation is applied to the inserted leaf, which, unlike standard heaps, is not required to be the last element.

Furthermore, we developed a \textit{Double Trimmed Open List} for heuristic alternation \citep{roger2010more}, accommodating dual open lists for node insertion under distinct heuristics and enabling alternate node retrieval. This variant employs the same pruning strategy but distinguishes itself by tracking each node's interaction with the open lists $-$ either being popped or trimmed. A node becomes eligible for deletion when its interaction count equals the number of lists it is associated with, provided it is not in the closed list. This ensures a node is removed only when it is confirmed to be redundant, safeguarding against premature deletion crucial for the lazy expansion of successors.

\section{Experiments} \label{Experiments}

Our experiments were conducted using Downward Lab's experiment module \citep{seipp2017downward}, adhering to the IPC satisficing track constraints of 1800 seconds and 8 \textit{GB} memory. Each test was ran on a single core of a cloud instance AMD EPYC 7702 2GHz processor. We implemented all proposed planners in C++, using LAPKT's \citep{lapkt} planning modules. For hybrid experiments, LAMA-First \citep{richter2010lama} and Scorpion-Maidu \citep{correa-et-al-ipc2023c,seipp2020saturated} served as backend components, employing Fast-Downward \citep{helmert2006fast} and the IPC2023 code repository \citep{maidu2023ipc}, respectively. Except for Approximate-BFWS, BFWS variants utilized the FD-grounder for grounding \citep{helmert2009concise}, however in problems where the FD grounder produces axioms (unsupported by LAPKT), LAPKT automatically switched to the Tarski grounder \citep{tarski:github:18}. Approximate-BFWS exclusively used the Tarski grounder, following its initial setup and IPC-2023 configuration. We utilized IPC satisficing track benchmarks as in \citep{singh2021approximate}, selecting the latest problem sets for recurring domains. We conducted two sets of experiments. The first benchmarked our planners against the base $\text{BFWS}(f_5)$ solver, evaluating the degree to which our proposed classical count-based novelty and trimmed open list techniques improve the coverage of $\text{BFWS}(f_5)$ and its exploration efficiency, measured as the number of expansions required to find a solution. The second set of experiments compared our hybrid configurations to Dual-BFWS, Approximate-BFWS, LAMA-First, and a "first" version of the IPC-2023 satisficing track winner Scorpion-Maidu that runs its first iteration, in order to assess the coverage gains obtainable by adopting our proposed frontend solver alongside existing solvers in a dual configuration, relying on memory thresholds alongside more traditional time thresholds to trigger the frontend to fallback.

\begin{algorithm}[t]
\caption{Trimmed Open List}\label{alg:cap}
\begin{algorithmic}

\Procedure{Trimmed Open List}{new node $N$, heap $H$, heap size limit $Z$}
\State $S \gets \text{size of } H$
\If{$S < Z$} \Comment{If heap hasn't reached size limit $Z$}
    \State $\text{insert } N \text{ into } H$
    \State $\textit{heapify-up}(H)$ \Comment{Reorder last element}
\Else
    \State $i\gets \text{uniformly random leaf index of } H$
    \State $O\gets H[i]$ \Comment{Node at random leaf index}
    \If{$N \text{ has a better heuristic value than } O$}
        \State $H[i] \gets N$ \Comment{Replace $O$ with $N$}
        \State $\textit{heapify-up}(H, i)$ \Comment{Reorder element at index $i$}
        \State $\text{discard } O$
    \Else
        \State $\text{discard } N$
    \EndIf
\EndIf
\EndProcedure

\end{algorithmic}
\end{algorithm}

\subsection{Count-based solvers}
We define three new planning solvers to evaluate the performance of our proposed trimmed open list and classical partitioned count-based novelty techniques. All our solvers are based on the $\text{BFWS}(f_5)$ search algorithm \citep{lipovetzky2017best}. $f_5$ is the evaluation function $\langle w,\#g \rangle$ where $w$ is the novelty measure and the goal counter $\#g$ counts the number of atomic goals not true in $s$. The novelty measure $w$ is computed given partition functions $\#g$ and $\#r(s)$, that is $w_{\langle\#g,\#r\rangle}$ (see \cite{lipovetzky2017best}). We use $w$ to refer to both width-novelty as well our proposed count-based novelty metric, adopting notation $f_5(X)$, where $X$ is the chosen novelty measure $w$. Let $W_x$ be the partitioned width-novelty metric with $\text{max-width}=x$ \citep{lipovetzky2017best}. Let $C_1=c^{V}$ be a partitioned classical count-based novelty metric over size-1 features $v\in V$. 
We define the following search algorithms:
\begin{itemize}
\item $BFWS_t(f_5(W_2))$: Standard BFWS($f_5$) with max-width=2. Subscript \textit{t} denotes use of a Single Trimmed Open List.
\item $BFCS_t(f_5(C_1))$: A BFWS($f_5$) solver where $w=C_1$. Subscript \textit{t} denotes use of a Single Trimmed Open List.
\item  $BFNoS_t(f_5(C_1),f_5(W_2))$: Best-first search solver using a Double Trimmed Open List, employing the evaluation function $f_5(C_1)$ $-$ with $w=C_1$ $-$ for first open list and evaluation function $f_5(W_2)$ $-$ with $w=W_2$ $-$ for the second open list, alternating expansions between lists. We refer to this solver as BFNoS (Best First Novelty Search), as it uses multiple novelty heuristics.
\end{itemize}
The trimmed open list is capped at a constant depth $D = 18$ (maximum size $Z = 524,287$) determined by empirical testing. We note that empirically, small changes in depth ($D=17$ or $D=19$) did not alter coverage results beyond the deviation recorded in experiments.

\begin{table}[htb]
  \centering
  \resizebox{\columnwidth}{!}{
  \begin{tabular}{ccccc}
    \hline
    \rule{0pt}{2.0ex}& & \multicolumn{3}{c}{\textbf{Coverage}} \\
    \cline{3-5}
    \rule{0pt}{2.6ex}\textbf{Model} & \textbf{\% Score} & \textbf{Total (1831)} & \textbf{IPC 2023 (140)} & \textbf{IPC 2018 (200)}\\ 
    \hline
    \rule{0pt}{2.6ex}$BFWS(f_5(W_2))$ &76.76\%& 1510 & 67 & 120\\
    $BFCS(f_5(C_1))$ &77.57\%& 1510 & 75 & 129\\
    $BFWS_t(f_5(W_2))$ & \makecell{79.78\%$\pm$0.13} & \makecell{1555$\pm$1.64} & \makecell{66$\pm$0.45} & \makecell{134$\pm$1.82} \\
    $BFCS_t(f_5(C_1))$ & \makecell{81.53\%$\pm$0.33} & \makecell{1568$\pm$4.76} & \makecell{78$\pm$0.89} & \makecell{146$\pm$2.79} \\
    $BFNoS$ & \makecell{\textbf{83.32\%$\pm$0.18}}& \makecell{\textbf{1600$\pm$3.90}}& \makecell{\textbf{87$\pm$1.10}}& \makecell{\textbf{149$\pm$1.34}}\\
    \hline
  \end{tabular}}
  \vspace{1mm}
  \caption{
  \% score and coverage comparison of proposed variants. \% score is the average of the \% of instances solved in each individual domain, calculated over all benchmark domains. Values represent the mean and include the standard deviation across 5 measurements.
  }
  \label{tab:experiment-1-table}
\end{table}

\paragraph{Analysis of proposed techniques}
We refer to the results of our experiment in Table \ref{tab:experiment-1-table} showing significant improvement in $\text{BFWS}(f_5)$'s coverage. 
The trimmed open list's smaller memory footprint substantially boosts the instance coverage of BFCS and BFWS solvers alike, demonstrating the versatility of its node filtering mechanism even when dealing with the $W_2$ heuristic's narrower range.

$\text{BFCS}_t(f_5)$ outperforms $\text{BFWS}_t(f_5)$ in both coverage and normalized score. This advantage is especially evident in problems with high atomic widths, such as \textit{Ricochet-Robots} \citep{ricochetrobots} from IPC2023 \citep{ipc2023}, where $\text{BFCS}_t(f_5)$ consistently solves 19 out of 20 instances compared to the $\text{BFWS}_t(f_5)$'s single solve. This underscores count-based novelty's scalability in complex problems, contrasting with $W_x$ metrics which have to revert to secondary heuristics after exhausting novel nodes, and demonstrates the \textit{O(n)} count-based novelty variant's capacity to seek novel tuples of size $>1$ as predicted by our analysis in Section \ref{Theoretical results}. However, while $C_1$ can prioritize states with a higher expected number of 2-tuples, it cannot explicitly detect the presence of 2-tuples like $W_2$, and $\text{BFCS}_t(f_5)$ does show reduced coverage compared to $\text{BFWS}_t(f_5)$ in various domains, suggesting that $W_2$ and $C_1$ heuristics offer complementary strengths. 

This synergy is exemplified by $\text{BFNoS}_t(f_5(C_1),f_5(W_2))$, which surpasses both in coverage due to its dual-heuristic approach. Notably, it also secures a significant 3.5\% gain in normalized scores compared to $\text{BFWS}_t(f_5)$, indicative of the planner's enhanced cross-domain generalization. To our knowledge, this is the first instance demonstrating performance gains from combining distinct goal-unaware exploration heuristics in planning, as opposed to combining goal-aware exploitation heuristics as in \citep{roger2010more}. On problems solved by both BFNoS and $\text{BFWS}_t(f_5)$, integrating the $C_1$ heuristic also reduces the number of node expansions required on average to solve instances as their size increases. This is illustrated in the upper-right quadrant of Figure~\ref{fig:expansion_comparison}, which highlights a general improvement in tackling large domains.

\begin{figure}[tbh]
    \centering
    \begin{overpic}[width=0.6\linewidth]{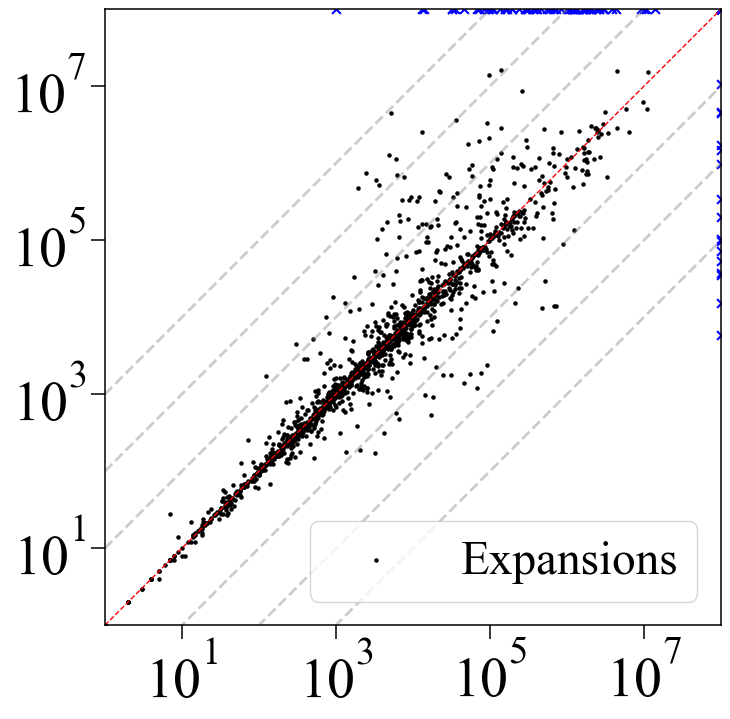}
        \put(55,-2){\makebox(0,0){BFNoS}} 
        \put(-3,55){\rotatebox{90}{\makebox(0,0){BFWS$_t(f_5)$}}} 
    \end{overpic}
    \vspace{3mm} 
    \caption{Number of nodes expanded across instances solved by $\text{BFNoS}$ and $\text{BFWS}_t(f_5)$. Blue crosses represent instances not solved by at least one planner.}
    \label{fig:expansion_comparison}
\end{figure}

\begin{table*}[t]
    \centering
    \fontsize{8}{8}\selectfont
    \begin{tabular}{|l|c|c|c|c|c|c|c|c|c|} \hline
        \rule{0pt}{8pt}Domain & BFNoS & Dual- & Apx-BFWS &  LAMA- & Maidu &  Maidu &  BFNoS- & BFNoS- & BFNoS- \\
        & & BFWS & (Tarski) & First &  & with $h^2$ & Dual & LAMA & Maidu-${h^2}$\\ \hline
        \rule{0pt}{8pt}agricola-sat18-strips (20)&\makecell{15$\pm$0.0}&\makecell{12}&\makecell{\textbf{18$\pm$0.9}}&\makecell{12}&\makecell{12}&\makecell{13}&\makecell{15$\pm$0.5}&\makecell{15$\pm$0.5}&\makecell{15$\pm$0.5} \\ 
        airport (50)&\makecell{\textbf{47$\pm$0.6}}&\makecell{46}&\makecell{\textbf{47$\pm$0.6}}&\makecell{34}&\makecell{38}&\makecell{45}&\makecell{46$\pm$0.6}&\makecell{46$\pm$0.5}&\makecell{46$\pm$0.6} \\
        caldera-sat18-adl (20)&\makecell{18$\pm$0.0}&\makecell{\textbf{19}}&\makecell{\textbf{19$\pm$0.6}}&\makecell{16}&\makecell{16}&\makecell{16}&\makecell{16$\pm$0.0}&\makecell{17$\pm$0.5}&\makecell{18$\pm$0.0} \\ 
        cavediving-14-adl (20)&\makecell{\textbf{8$\pm$0.5}}&\makecell{\textbf{8}}&\makecell{\textbf{8$\pm$0.5}}&\makecell{7}&\makecell{7}&\makecell{7}&\makecell{\textbf{8$\pm$0.0}}&\makecell{\textbf{8$\pm$0.0}}&\makecell{\textbf{8$\pm$0.5}} \\
        childsnack-sat14-strips (20)&\makecell{1$\pm$1.1}&\makecell{\textbf{9}}&\makecell{5$\pm$0.6}&\makecell{6}&\makecell{6}&\makecell{6}&\makecell{8$\pm$0.0}&\makecell{6$\pm$0.0}&\makecell{6$\pm$0.5} \\ 
        citycar-sat14-adl (20)&\makecell{\textbf{20$\pm$0.0}}&\makecell{\textbf{20}}&\makecell{\textbf{20$\pm$0.0}}&\makecell{5}&\makecell{6}&\makecell{6}&\makecell{\textbf{20$\pm$0.0}}&\makecell{\textbf{20$\pm$0.0}}&\makecell{\textbf{20$\pm$0.0}} \\ 
        data-network-sat18-strips (20)&\makecell{17$\pm$0.6}&\makecell{13}&\makecell{\textbf{19$\pm$0.5}}&\makecell{13}&\makecell{16}&\makecell{16}&\makecell{16$\pm$0.8}&\makecell{15$\pm$1.1}&\makecell{16$\pm$0.8} \\ 
        depot (22)&\makecell{\textbf{22$\pm$0.0}}&\makecell{\textbf{22}}&\makecell{\textbf{22$\pm$0.0}}&\makecell{20}&\makecell{\textbf{22}}&\makecell{\textbf{22}}&\makecell{\textbf{22$\pm$0.0}}&\makecell{\textbf{22$\pm$0.0}}&\makecell{\textbf{22$\pm$0.0}} \\ 
        flashfill-sat18-adl (20)&\makecell{14$\pm$1.3}&\makecell{\textbf{17}}&\makecell{15$\pm$1.6}&\makecell{14}&\makecell{15}&\makecell{14}&\makecell{\textbf{17$\pm$0.5}}&\makecell{16$\pm$0.6}&\makecell{16$\pm$0.9} \\
        floortile-sat14-strips (20)&\makecell{2$\pm$0.5}&\makecell{2}&\makecell{2$\pm$0.0}&\makecell{2}&\makecell{2}&\makecell{\textbf{20}}&\makecell{2$\pm$0.0}&\makecell{2$\pm$0.0}&\makecell{\textbf{20$\pm$0.0}} \\ 
        folding (20)&\makecell{9$\pm$0.0}&\makecell{5}&\makecell{5$\pm$0.5}&\makecell{\textbf{11}}&\makecell{\textbf{11}}&\makecell{\textbf{11}}&\makecell{9$\pm$0.0}&\makecell{9$\pm$0.0}&\makecell{9$\pm$0.0} \\
        freecell (80)&\makecell{\textbf{80$\pm$0.0}}&\makecell{\textbf{80}}&\makecell{\textbf{80$\pm$0.0}}&\makecell{79}&\makecell{\textbf{80}}&\makecell{\textbf{80}}&\makecell{\textbf{80$\pm$0.0}}&\makecell{\textbf{80$\pm$0.0}}&\makecell{\textbf{80$\pm$0.0}} \\ 
        hiking-sat14-strips (20)&\makecell{\textbf{20$\pm$0.0}}&\makecell{18}&\makecell{\textbf{20$\pm$0.0}}&\makecell{\textbf{20}}&\makecell{\textbf{20}}&\makecell{\textbf{20}}&\makecell{\textbf{20$\pm$0.0}}&\makecell{\textbf{20$\pm$0.0}}&\makecell{\textbf{20$\pm$0.0}} \\ 
        labyrinth (20)&\makecell{15$\pm$0.5}&\makecell{5}&\makecell{\textbf{18$\pm$0.5}}&\makecell{1}&\makecell{0}&\makecell{2}&\makecell{15$\pm$0.5}&\makecell{15$\pm$0.5}&\makecell{15$\pm$0.5} \\ 
        maintenance-sat14-adl (20)&\makecell{\textbf{17$\pm$0.0}}&\makecell{\textbf{17}}&\makecell{\textbf{17$\pm$0.0}}&\makecell{11}&\makecell{13}&\makecell{13}&\makecell{\textbf{17$\pm$0.0}}&\makecell{\textbf{17$\pm$0.0}}&\makecell{\textbf{17$\pm$0.0}} \\ 
        mystery (30)&\makecell{18$\pm$0.6}&\makecell{\textbf{19}}&\makecell{\textbf{19$\pm$0.0}}&\makecell{\textbf{19}}&\makecell{\textbf{19}}&\makecell{\textbf{19}}&\makecell{\textbf{19$\pm$0.0}}&\makecell{\textbf{19$\pm$0.0}}&\makecell{\textbf{19$\pm$0.0}} \\ 
        nomystery-sat11-strips (20)&\makecell{14$\pm$0.8}&\makecell{\textbf{19}}&\makecell{14$\pm$0.5}&\makecell{11}&\makecell{\textbf{19}}&\makecell{18}&\makecell{\textbf{19$\pm$0.0}}&\makecell{15$\pm$0.6}&\makecell{17$\pm$0.0} \\ 
        nurikabe-sat18-adl (20)&\makecell{16$\pm$0.6}&\makecell{14}&\makecell{17$\pm$0.5}&\makecell{9}&\makecell{11}&\makecell{16}&\makecell{17$\pm$0.6}&\makecell{17$\pm$0.0}&\makecell{\textbf{18$\pm$0.0}} \\  
        org-synth-split-sat18-strips (20)&\makecell{8$\pm$0.5}&\makecell{12}&\makecell{8$\pm$0.0}&\makecell{\textbf{14}}&\makecell{\textbf{14}}&\makecell{\textbf{14}}&\makecell{11$\pm$0.5}&\makecell{\textbf{14$\pm$0.0}}&\makecell{\textbf{14$\pm$0.9}} \\ 
        parcprinter-sat11-strips (20)&\makecell{9$\pm$0.6}&\makecell{16}&\makecell{11$\pm$1.3}&\makecell{\textbf{20}}&\makecell{\textbf{20}}&\makecell{\textbf{20}}&\makecell{\textbf{20$\pm$0.0}}&\makecell{\textbf{20$\pm$0.0}}&\makecell{\textbf{20$\pm$0.0}} \\ 
        pathways (30)&\makecell{26$\pm$0.9}&\makecell{\textbf{30}}&\makecell{28$\pm$1.1}&\makecell{23}&\makecell{25}&\makecell{25}&\makecell{\textbf{30$\pm$0.0}}&\makecell{27$\pm$0.8}&\makecell{27$\pm$0.7} \\
        pipesworld-notankage (50)&\makecell{\textbf{50$\pm$0.0}}&\makecell{\textbf{50}}&\makecell{\textbf{50$\pm$0.0}}&\makecell{43}&\makecell{45}&\makecell{45}&\makecell{\textbf{50$\pm$0.0}}&\makecell{\textbf{50$\pm$0.0}}&\makecell{\textbf{50$\pm$0.0}} \\ 
        pipesworld-tankage (50)&\makecell{43$\pm$1.6}&\makecell{42}&\makecell{\textbf{44$\pm$0.6}}&\makecell{43}&\makecell{43}&\makecell{43}&\makecell{43$\pm$0.8}&\makecell{43$\pm$0.5}&\makecell{43$\pm$0.6} \\ 
        recharging-robots (20)&\makecell{\textbf{14$\pm$0.6}}&\makecell{12}&\makecell{\textbf{14$\pm$0.8}}&\makecell{13}&\makecell{13}&\makecell{13}&\makecell{\textbf{14$\pm$0.5}}&\makecell{\textbf{14$\pm$0.0}}&\makecell{\textbf{14$\pm$0.5}} \\ 
        ricochet-robots (20)&\makecell{\textbf{20$\pm$0.5}}&\makecell{\textbf{20}}&\makecell{18$\pm$0.6}&\makecell{14}&\makecell{18}&\makecell{18}&\makecell{\textbf{20$\pm$0.0}}&\makecell{\textbf{20$\pm$0.0}}&\makecell{\textbf{20$\pm$0.0}} \\ 
        rubiks-cube (20)&\makecell{5$\pm$0.0}&\makecell{6}& \makecell{5$\pm$0.6}&\makecell{\textbf{20}}&\makecell{\textbf{20}}&\makecell{\textbf{20}}&\makecell{5$\pm$0.0}&\makecell{\textbf{20$\pm$0.0}}&\makecell{16$\pm$0.6} \\ 
        satellite (36)&\makecell{34$\pm$0.8}&\makecell{33}& \makecell{34$\pm$0.5}&\makecell{\textbf{36}}&\makecell{\textbf{36}}&\makecell{\textbf{36}}&\makecell{34$\pm$0.6}&\makecell{35$\pm$0.0}&\makecell{35$\pm$0.0} \\ 
        schedule (150)&\makecell{149$\pm$1.3}&\makecell{\textbf{150}}&  \makecell{149$\pm$1.3}&\makecell{\textbf{150}}&\makecell{\textbf{150}}&\makecell{\textbf{150}}&\makecell{149$\pm$0.7}&\makecell{\textbf{150$\pm$0.0}}&\makecell{\textbf{150$\pm$0.0}} \\ 
        settlers-sat18-adl (20)&\makecell{13$\pm$1.5}&\makecell{7}& \makecell{12$\pm$0.7}&\makecell{17}&\makecell{\textbf{18}}&\makecell{\textbf{18}}&\makecell{12$\pm$0.5}&\makecell{17$\pm$0.0}&\makecell{17$\pm$0.5} \\ 
        slitherlink (20)&\makecell{\textbf{5$\pm$0.6}}&\makecell{\textbf{5}}& \makecell{\textbf{5$\pm$0.7}}&\makecell{0}&\makecell{0}&\makecell{0}&\makecell{\textbf{5$\pm$0.5}}&\makecell{3$\pm$0.6}&\makecell{4$\pm$0.7} \\ 
        snake-sat18-strips (20)&\makecell{\textbf{20$\pm$0.0}}&\makecell{17}& \makecell{\textbf{20$\pm$0.0}}&\makecell{5}&\makecell{14}&\makecell{14}&\makecell{\textbf{20$\pm$0.0}}&\makecell{\textbf{20$\pm$0.0}}&\makecell{\textbf{20$\pm$0.0}} \\ 
        sokoban-sat11-strips (20)&\makecell{15$\pm$1.1}&\makecell{17}& \makecell{14$\pm$0.9}&\makecell{19}&\makecell{19}&\makecell{\textbf{20}}&\makecell{15$\pm$0.5}&\makecell{19$\pm$0.0}&\makecell{\textbf{20$\pm$0.0}} \\ 
        spider-sat18-strips (20)&\makecell{17$\pm$1.3}&\makecell{16}& \makecell{16$\pm$1.1}&\makecell{16}&\makecell{16}&\makecell{17}&\makecell{\textbf{18$\pm$0.0}}&\makecell{\textbf{18$\pm$0.0}}&\makecell{\textbf{18$\pm$0.9}} \\ 
        storage (30)&\makecell{\textbf{30$\pm$0.5}}&\makecell{29}& \makecell{\textbf{30$\pm$0.0}}&\makecell{20}&\makecell{25}&\makecell{25}&\makecell{29$\pm$0.5}&\makecell{29$\pm$0.0}&\makecell{29$\pm$0.6} \\ 
        termes-sat18-strips (20)&\makecell{10$\pm$0.8}&\makecell{10}& \makecell{5$\pm$1.5}&\makecell{\textbf{16}}&\makecell{14}&\makecell{14}&\makecell{10$\pm$0.5}&\makecell{14$\pm$0.0}&\makecell{14$\pm$0.0} \\ 
        tetris-sat14-strips (20)&\makecell{\textbf{20$\pm$0.0}}&\makecell{17}& \makecell{\textbf{20$\pm$0.0}}&\makecell{16}&\makecell{17}&\makecell{20}&\makecell{\textbf{20$\pm$0.0}}&\makecell{\textbf{20$\pm$0.0}}&\makecell{\textbf{20$\pm$0.0}} \\ 
        thoughtful-sat14-strips (20)&\makecell{\textbf{20$\pm$0.0}}&\makecell{\textbf{20}}& \makecell{\textbf{20$\pm$0.2}}&\makecell{15}&\makecell{19}&\makecell{19}&\makecell{\textbf{20$\pm$0.0}}&\makecell{\textbf{20$\pm$0.0}}&\makecell{\textbf{20$\pm$0.0}} \\ 
        tidybot-sat11-strips (20)&\makecell{\textbf{20$\pm$0.0}}&\makecell{18}& \makecell{\textbf{20$\pm$0.2}}&\makecell{17}&\makecell{\textbf{20}}&\makecell{\textbf{20}}&\makecell{\textbf{20$\pm$0.0}}&\makecell{\textbf{20$\pm$0.0}}&\makecell{\textbf{20$\pm$0.0}} \\ 
        transport-sat14-strips (20)&\makecell{\textbf{20$\pm$0.0}}&\makecell{\textbf{20}}& \makecell{\textbf{20$\pm$0.2}}&\makecell{17}&\makecell{18}&\makecell{16}&\makecell{\textbf{20$\pm$0.5}}&\makecell{\textbf{20$\pm$0.0}}&\makecell{\textbf{20$\pm$0.0}} \\
        trucks-strips (30)&\makecell{8$\pm$0.8}&\makecell{19}& \makecell{13$\pm$1.5}&\makecell{18}&\makecell{20}&\makecell{\textbf{22}}&\makecell{17$\pm$0.5}&\makecell{16$\pm$0.0}&\makecell{20$\pm$0.0} \\
        woodworking-sat11-strips (20)&\makecell{\textbf{20$\pm$0.0}}&\makecell{\textbf{20}}& \makecell{12$\pm$1.1}&\makecell{\textbf{20}}&\makecell{\textbf{20}}&\makecell{\textbf{20}}&\makecell{\textbf{20$\pm$0.0}}&\makecell{\textbf{20$\pm$0.0}}&\makecell{\textbf{20$\pm$0.0}} \\ 
        \hline
        \rule{0pt}{8pt}\textbf{Coverage (1831)} &\makecell{1600$\pm$3.9}& \makecell{1603}& \makecell{1606$\pm$3.9}& \makecell{1535}& \makecell{1590}& \makecell{1626}& \makecell{1641$\pm$1.9} &\makecell{1662$\pm$2.3} &\makecell{\textbf{1688$\pm$3.3}}\\ \hline
        \rule{0pt}{12pt}\textbf{\% Score (100\%)} &\makecell{83.32\%\\$\pm$0.18}& \makecell{83.23\%}& \makecell{83.51\%\\$\pm$0.27}& \makecell{79.06\%}& \makecell{82.84\%}& \makecell{85.31\%}& \makecell{86.23\%\\$\pm$0.09}& \makecell{87.87\%\\$\pm$0.17} &\makecell{\textbf{89.79\%}\\\textbf{$\pm$0.22}}\\ \hline
        \rule{0pt}{8pt}\textbf{Frontend \% coverage share} &\makecell{-}& \makecell{-}& \makecell{-}& \makecell{-}& \makecell{-}& \makecell{-}&\makecell{97\%} &\makecell{96\%} &\makecell{94\%}\\ \hline
    \end{tabular}
    \vspace{1mm}
    \caption{Comparative performance analysis across various domains. \textit{\% score} is the average of the \% of instances solved in each domain. \textit{Frontend \% coverage share} refers to the \% of covered instances solved by the BFNoS frontend. Domains which are fully solved by all planners are omitted. Values for BFNoS variants and Approximate-BFWS represent the mean and include the standard deviation across 5 measurements. 
    Domains that are fully solved by all planners are omitted but included in Table~\ref{tab:comparative_performance_large}.
    }
    \vspace{2mm}
    \label{tab:comparative_performance}
\end{table*}

\subsection{Hybrid solvers with BFNoS frontend}
\label{subsect:hybrid-solvers}

We adopt $\text{BFNoS}_t(f_5(C_1),f_5(W_2))$ as a frontend solver, capped by a 6 $GB$ memory threshold and a time threshold close to the overall time limit, to enable backend fallback for all unresolved searches. We pair it with three backend planners from literature: the Dual-BFWS backend component (\textit{BFNoS-Dual}), LAMA-First (\textit{BFNoS-LAMA}), and the "first" version of Scorpion-Maidu (\textit{BFNoS-Maidu-$h^2$}), in its IPC2023 configuration with the \textit{$h^2$-preprocessor} \citep{alcazar2015reminder}. These were chosen for their complementary heuristics to our frontend's $f_5$ partitioning, promoting diverse solution strategies to enhance coverage diversity. Frontend time thresholds are set to 1600 \textit{sec} with BFNoS-Dual and BFNoS-LAMA, and 1400 \textit{sec} for BFNoS-Maidu-$h^2$, to account for up to 180 $sec$ of preprocessing allowance.

\begin{figure}[tbh]
    \centering
    \includegraphics[width=0.6\linewidth]{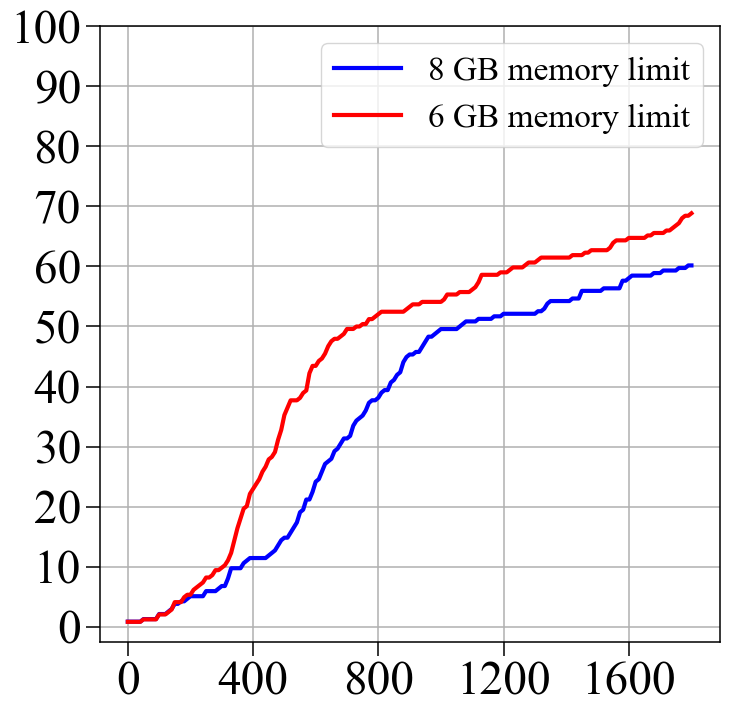}
    \caption{Cumulative \% of all failures attributed to search-time memory failures ($y$-axis) vs. time of failure (\textit{sec}) ($x$-axis), for BFNoS solvers with 6 \textit{GB} and 8 \textit{GB} memory and 1800 \textit{sec} time limits.}
    \label{fig:draft_memory_fail_by_time.png}
    \vspace{4mm}
\end{figure}

\paragraph{Memory threshold}
Solvers tend to exhibit diminishing returns in the number of instances solved with respect to both time and memory. Thus, as memory usage increases while solving an instance, it is more likely that the instance will not be solvable within the defined memory constraint, indicating that the adopted heuristics are not effective for that particular problem. Novelty heuristics $W_2$ and $C_1$ provide a quick evaluation of state novelty which, in combination with efficient partition functions, allow solvers to expand considerably more nodes per unit time than counterparts adopting more informed but expensive heuristics such as $h_{ff}$ \citep{hoffmann2001ff}, albeit at a greater memory cost. In other words, they can reach the "flatter" region of the coverage-memory relation more quickly. We leverage this trait to introduce dual configuration planners in which the frontend seeks to prioritize coverage, but also fail as quickly as possible when memory usage reaches those flatter areas, accounting for different domains' characteristics with respect to memory usage. Only search-time memory usage is considered, as tested dual solvers do not fallback to the backend when grounder errors occur. At 8 \textit{GB} and 1800 \textit{sec} limits, BFNoS hits half of its total failures by reaching the memory limit when half of the available time has passed, with memory failures constituting over 60\% of the total (Fig.~\ref{fig:draft_memory_fail_by_time.png}), making it a suitable frontend solver for our strategy.

\paragraph{Discussion}

Table~\ref{tab:comparative_performance} shows that $\text{BFNoS}_t(f_5(C_1),f_5(W_2))$ alone achieves comparable coverage to all baselines with the exception of the IPC2023 configuration of Scorpion-Maidu, which is the only baseline solver incorporating a preprocessor. 

The hybrid solvers adopting a $\text{BFNoS}_t(f_5(C_1),f_5(W_2))$ frontend outperform all baselines, denoting a meaningful increase in coverage and normalized \% score compared to their respective backends, particularly for the hybrid configuration with LAMA-First backend, which covers 62 and 127 more instances compared to BFNoS and LAMA-First respectively. BFNoS-Maidu-$h^2$ gains an edge on BFNoS-LAMA mainly thanks to the additional preprocessor in the backend, which only gets used if the frontend solver fails. Still, we note that our implementation of BFNoS-LAMA and BFNoS-Maidu-$h^2$ perform the grounding operation a second time for the backend, resulting in redundant computations which may use up significant time in hard-to-ground domains.

Our proposed frontend solver is the main component of all hybrid planners, being responsible for over 94\% of all solved instances in all cases. We note that the combination with the Dual-BFWS backend performs more poorly than other hybrid configurations, yet this does not come as a surprise. The Dual-BFWS backend is a Novelty planner which shares many more similarities with our own proposed frontend, most crucially the $W_2$ primary heuristic, thus causing more overlap in coverage between the frontend and backend. Still, even this version suffices to outperform tested benchmarks. We also note a significant increase in coverage of proposed hybrid planners on problem sets from IPC2023 and IPC2018, with BFNoS-LAMA covering 101 and 164 instances, and BFNoS-Maidu-$h^2$ covering 98 and 166 instances on average respectively. 

In all cases, the implementation of hybrid planner configurations leveraging only two powerful solvers allows us to keep such dual solvers as simple as possible, which we argue helps us reduce the risk of overfitting to the set of available benchmarks.

\begin{figure}[tbh]
    \centering
    \includegraphics[width=0.9\linewidth]{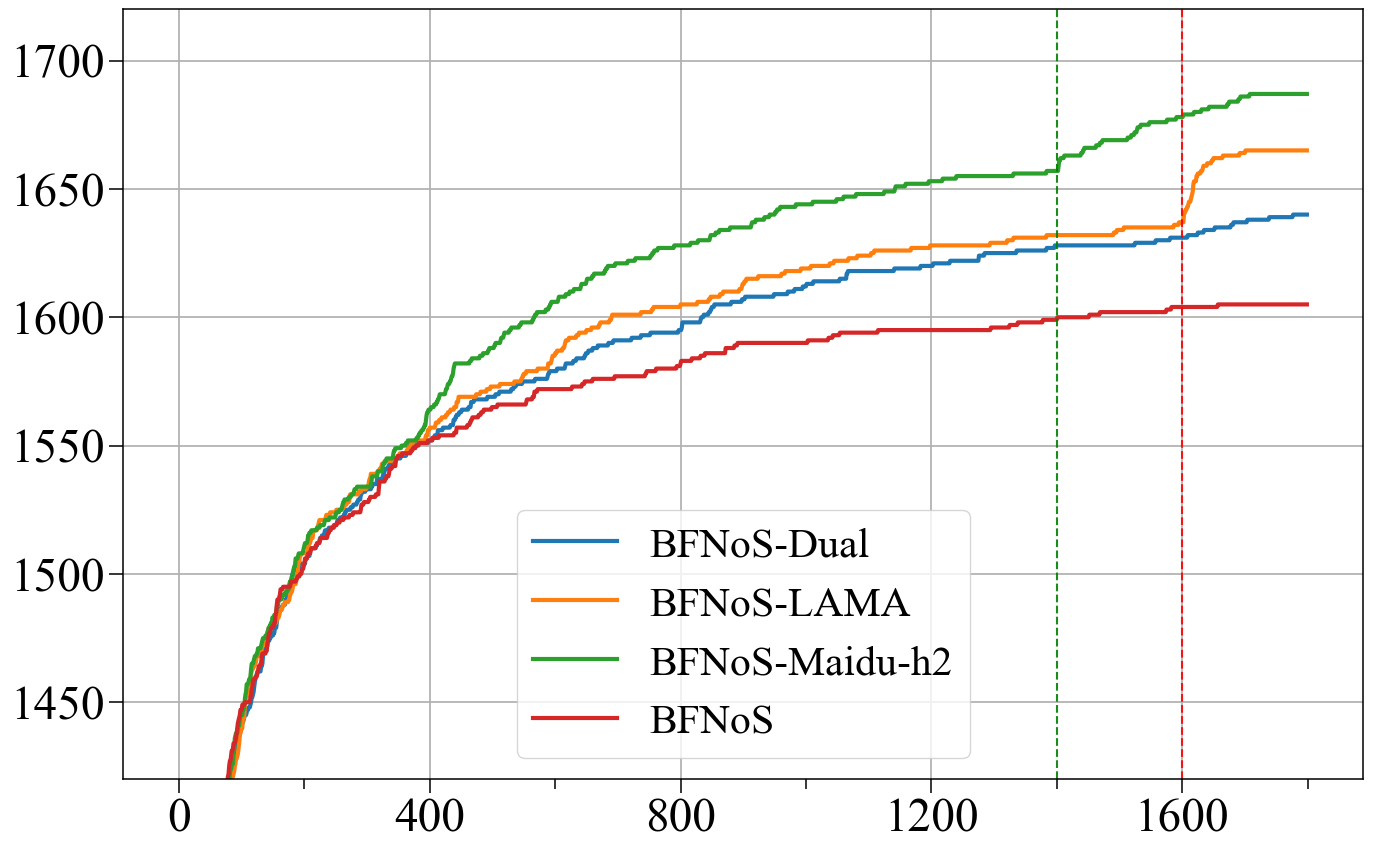}
    \vspace{1mm}
    \caption{Instance coverage ($y$-axis) vs. time (\textit{sec}) ($x$-axis). Comparison of BFNoS with the three presented hybrid configurations. The solvers are run in an identical configuration to previous sections, with BFNoS using the full 8 \textit{GB} memory allowance when run on its own, and a 6 \textit{GB} memory threshold when run as the frontend of hybrid configurations. The vertical lines signal the 1400 \textit{sec} (green) and 1600 \textit{sec} (red) time thresholds.}
    \label{fig:coverage-vs-time}
\end{figure}

\begin{table}[b]
  \centering
  \resizebox{\columnwidth}{!}{
  \begin{tabular}{lcc}
    \hline 
    \textbf{Hybrid Solver} & \textbf{Coverage} & \textbf{\% Score}  \\
    \hline
    {BFNoS-Dual$_{MO}$}& \makecell{1636$\pm$3.3} & \makecell{85.90\%$\pm$0.14} \\
    {BFNoS-LAMA$_{MO}$}& \makecell{1638$\pm$4.9} & \makecell{86.13\%$\pm$0.26}\\
    {BFNoS-Maidu$_{MO}$}& \makecell{1643$\pm$3.7}  & \makecell{86.45\%$\pm$0.17}\\
    {BFNoS-Maidu-$h^2$$_{MO}$}& \makecell{1662$\pm$4.7} & \makecell{88.02\%$\pm$0.21}\\
    \hline
  \end{tabular}
  }
  \vspace{1mm}
  \caption{
  Coverage and \% score of hybrid planner variants adopting only a 6 \textit{GB} memory threshold across all domains. Subscript $MO$ refers to memory-threshold-only versions. The BFNoS-Maidu planner is a variant of BFNoS-Maidu-$h^2$ that does not use the $h^2$ preprocessor with its Scorpion-Maidu backend. 
  Values represent the mean and standard deviation across 5 measurements.
  }
  \label{tab:hybrid-no-time-th}
   \vspace{2mm}
\end{table}

\vspace{4mm}
\paragraph{Coverage-over-time considerations}
We highlight the coverage-over-time benefits of adopting our proposed memory threshold strategy alongside BFNoS. The memory-threshold fallbacks of our proposed frontend start rapidly increasing shortly before the 400 \textit{sec} mark (Fig.~\ref{fig:draft_memory_fail_by_time.png}), and this fact is reflected in the increasing gap in instance coverage between the base BFNoS solver and dual configurations in Figure~\ref{fig:coverage-vs-time} beginning at that point in time. This trait is beneficial both for the average planning time of covered instances, as well as for satisficing planners, which gain more time to improve the quality of discovered plans. In this respect, memory-threshold-only variants of proposed dual solvers (Table~\ref{tab:hybrid-no-time-th}) $-$ that forego the use of time thresholds altogether $-$ still perform competitively compared to benchmark solvers in Table~\ref{tab:comparative_performance}. However, there still are limitations to the technique, as exemplified by the \textit{Rubik's Cube} domain \citep{muppasani2024rubiks} from IPC 2023. BFNoS fails to solve 15 out of the 20 instances available, but does not reach the 6 \textit{GB} memory threshold before the time limit, precluding fallback in memory-threshold-only dual configurations. This domain is the main contributor to the leap in instance coverage for BFNoS-LAMA and BFNoS-Maidu-$h^2$ that follows their respective time thresholds in Figure~\ref{fig:coverage-vs-time}.

\section{Conclusion}
In this paper we introduce the concept of count-based novelty as an alternative novelty exploration framework in Classical Planning, showing that the arity-1 variant of our proposed metric is capable of effectively predicting states with novel \textit{k}-tuples only using a constant number of tuples. We introduce the use of counts and Hamming distances to relate the exploratory behavior of count-based novelty to the existing body of knowledge on Novelty. We also propose single and double Trimmed Open List variants which allow us to upper bound the open list size by pruning nodes unlikely to be expanded. Our techniques used in the BFNoS solver demonstrate the effectiveness of combining distinct novelty metrics, achieving competitive coverage compared to state-of-the-art planners. Finally, we detail a dual-configuration strategy adopting a BFNoS frontend solver and introducing memory thresholds alongside customary time thresholds, justify the suitability of our proposed strategy, and demonstrate improved coverage performance compared to state-of-the-art planners.

Our work provides foundational knowledge on count-based novelty through basic solutions that mirror our theoretical analysis. 
Future directions may include alternative count-based variants over tuples or extracted features to guide exploration in general and domain-specific planners, as well as adaptation to the existing body of work blending Novelty and heuristic estimates. Our contributions also provide the basis to bridge Classical Planning with the broader paradigm of count-based exploration, benefiting knowledge transfer with related areas such as Reinforcement Learning. Trimmed open lists and memory thresholds also proved to be simple yet effective solutions, pointing at the potential for a deeper analysis of similar ideas in Classical Planning.


\begin{ack}
This research was supported by use of the Nectar Research Cloud and by the Melbourne Research Cloud. The Nectar Research Cloud is a collaborative Australian research platform supported by the NCRIS-funded Australian Research Data Commons (ARDC). We extend our gratitude to Anubhav Singh for providing the PDDL file modifications for the \textit{Tidybot}, \textit{Storage}, and \textit{GED} domains used with the Tarski grounder, and to Augusto B. Corrêa for providing the information needed to run the `First' version of Scorpion-Maidu using Fast Downward.

\end{ack}



\bibliography{paper}

\begin{thebibliography}{30}
\providecommand{\natexlab}[1]{#1}
\providecommand{\url}[1]{\texttt{#1}}
\expandafter\ifx\csname urlstyle\endcsname\relax
  \providecommand{\doi}[1]{doi: #1}\else
  \providecommand{\doi}{doi: \begingroup \urlstyle{rm}\Url}\fi

\bibitem[Alc{\'a}zar and Torralba(2015)]{alcazar2015reminder}
V.~Alc{\'a}zar and A.~Torralba.
\newblock A reminder about the importance of computing and exploiting invariants in planning.
\newblock In \emph{Proceedings of the International Conference on Automated Planning and Scheduling}, volume~25, pages 2--6, 2015.

\bibitem[Auer et~al.(2002)Auer, Cesa-Bianchi, and Fischer]{auer2002finite}
P.~Auer, N.~Cesa-Bianchi, and P.~Fischer.
\newblock Finite-time analysis of the multiarmed bandit problem.
\newblock \emph{Machine learning}, 47:\penalty0 235--256, 2002.

\bibitem[Corr\^{e}a et~al.(2023{\natexlab{a}})Corr\^{e}a, Franc\`{e}s, Hecher, Longo, and Seipp]{correa-et-al-ipc2023c}
A.~B. Corr\^{e}a, G.~Franc\`{e}s, M.~Hecher, D.~M. Longo, and J.~Seipp.
\newblock {Scorpion Maidu}: Width search in the {Scorpion} planning system.
\newblock In \emph{Tenth {I}nternational {P}lanning {C}ompetition ({IPC}-10): Planner Abstracts}, 2023{\natexlab{a}}.

\bibitem[Corr\^{e}a et~al.(2023{\natexlab{b}})Corr\^{e}a, Franc\`{e}s, Hecher, Longo, and Seipp]{maidu2023ipc}
A.~B. Corr\^{e}a, G.~Franc\`{e}s, M.~Hecher, D.~M. Longo, and J.~Seipp.
\newblock Scorpion maidu satisficing ipc2023-classical.
\newblock \url{https://github.com/ipc2023-classical/planner8/tree/ipc2023-classical}, 2023{\natexlab{b}}.
\newblock Accessed: 2024-04-20.

\bibitem[Dold and Helmert(2024)]{dold2024novelty}
S.~Dold and M.~Helmert.
\newblock Novelty vs. potential heuristics: A comparison of hardness measures for satisficing planning.
\newblock In \emph{Proceedings of the AAAI Conference on Artificial Intelligence}, volume~38, pages 20692--20699, 2024.

\bibitem[Fiser and Eifler(2023)]{ricochetrobots}
D.~Fiser and R.~Eifler.
\newblock Ricochet robots {PDDL} domain.
\newblock \url{https://github.com/ipc2023-classical/domain-ricochet-robots}, 2023.
\newblock Accessed: 2024-04-20.

\bibitem[Franc\'{e}s et~al.(2018)Franc\'{e}s, Ramirez, and Collaborators]{tarski:github:18}
G.~Franc\'{e}s, M.~Ramirez, and Collaborators.
\newblock Tarski: An {AI} planning modeling framework.
\newblock \url{https://github.com/aig-upf/tarski}, 2018.

\bibitem[Gro{\ss} et~al.(2020)Gro{\ss}, Torralba, and Fickert]{gross2020novel}
J.~Gro{\ss}, A.~Torralba, and M.~Fickert.
\newblock Novel is not always better: On the relation between novelty and dominance pruning.
\newblock In \emph{Proceedings of the AAAI Conference on Artificial Intelligence}, volume~34, pages 9875--9882, 2020.

\bibitem[Haslum et~al.(2019)Haslum, Lipovetzky, Magazzeni, Muise, Brachman, Rossi, and Stone]{haslum2019introduction}
P.~Haslum, N.~Lipovetzky, D.~Magazzeni, C.~Muise, R.~Brachman, F.~Rossi, and P.~Stone.
\newblock \emph{An introduction to the planning domain definition language}, volume~13.
\newblock Springer, 2019.

\bibitem[Helmert(2006)]{helmert2006fast}
M.~Helmert.
\newblock The {F}ast {D}ownward planning system.
\newblock \emph{Journal of Artificial Intelligence Research}, 26:\penalty0 191--246, 2006.

\bibitem[Helmert(2009)]{helmert2009concise}
M.~Helmert.
\newblock Concise finite-domain representations for {PDDL} planning tasks.
\newblock \emph{Artificial Intelligence}, 173\penalty0 (5-6):\penalty0 503--535, 2009.

\bibitem[Hoffmann and Nebel(2001)]{hoffmann2001ff}
J.~Hoffmann and B.~Nebel.
\newblock The {FF} planning system: Fast plan generation through heuristic search.
\newblock \emph{Journal of Artificial Intelligence Research}, 14:\penalty0 253--302, 2001.

\bibitem[Katz et~al.(2017)Katz, Lipovetzky, Moshkovich, and Tuisov]{katz2017adapting}
M.~Katz, N.~Lipovetzky, D.~Moshkovich, and A.~Tuisov.
\newblock Adapting novelty to classical planning as heuristic search.
\newblock In \emph{Proceedings of the International Conference on Automated Planning and Scheduling}, volume~27, pages 172--180, 2017.

\bibitem[Lipovetzky(2021)]{lipovetzky2021planning}
N.~Lipovetzky.
\newblock Planning for novelty: Width-based algorithms for common problems in control, planning and reinforcement learning.
\newblock \emph{arXiv preprint arXiv:2106.04866}, 2021.

\bibitem[Lipovetzky and Geffner(2011)]{lipovetzky2011searching}
N.~Lipovetzky and H.~Geffner.
\newblock Searching for plans with carefully designed probes.
\newblock In \emph{Proceedings of the International Conference on Automated Planning and Scheduling}, volume~21, pages 154--161, 2011.

\bibitem[Lipovetzky and Geffner(2012)]{lipovetzky2012width}
N.~Lipovetzky and H.~Geffner.
\newblock Width and serialization of classical planning problems.
\newblock In \emph{ECAI 2012}, pages 540--545. IOS Press, 2012.

\bibitem[Lipovetzky and Geffner(2014)]{lipovetzky2014width}
N.~Lipovetzky and H.~Geffner.
\newblock Width-based algorithms for classical planning: New results.
\newblock In \emph{ECAI 2014}, pages 1059--1060. IOS Press, 2014.

\bibitem[Lipovetzky and Geffner(2017{\natexlab{a}})]{lipovetzky2017best}
N.~Lipovetzky and H.~Geffner.
\newblock Best-first width search: Exploration and exploitation in classical planning.
\newblock In \emph{Proceedings of the AAAI Conference on Artificial Intelligence}, volume~31, 2017{\natexlab{a}}.

\bibitem[Lipovetzky and Geffner(2017{\natexlab{b}})]{lipovetzky2017polynomial}
N.~Lipovetzky and H.~Geffner.
\newblock A polynomial planning algorithm that beats {LAMA} and {FF}.
\newblock In \emph{Proceedings of the International Conference on Automated Planning and Scheduling}, volume~27, pages 195--199, 2017{\natexlab{b}}.

\bibitem[Muppasani et~al.(2024)Muppasani, Pallagani, and Srivastava]{muppasani2024rubiks}
B.~C. Muppasani, V.~Pallagani, and B.~Srivastava.
\newblock Solving the rubik's cube with a pddl planner.
\newblock In \emph{ICAPS 2024 System's Demonstration track}, 2024.

\bibitem[Ramirez et~al.(2015)Ramirez, Lipovetzky, and Muise]{lapkt}
M.~Ramirez, N.~Lipovetzky, and C.~Muise.
\newblock {Lightweight Automated Planning ToolKiT}.
\newblock \url{http://lapkt.org/}, 2015.
\newblock Accessed: 2020.

\bibitem[Ramirez et~al.(2024)Ramirez, Lipovetzky, Muise, Singh, and Rosa]{lapkt-bfnos}
M.~Ramirez, N.~Lipovetzky, C.~Muise, A.~Singh, and G.~Rosa.
\newblock {Lightweight Automated Planning ToolKiT - Extended with BFNoS Solvers}.
\newblock \url{https://github.com/grosa97/LAPKT-BFNoS}, 2024.
\newblock Accessed: 2024.

\bibitem[Richter and Westphal(2010)]{richter2010lama}
S.~Richter and M.~Westphal.
\newblock The {LAMA} planner: Guiding cost-based anytime planning with landmarks.
\newblock \emph{Journal of Artificial Intelligence Research}, 39:\penalty0 127--177, 2010.

\bibitem[R{\"o}ger and Helmert(2010)]{roger2010more}
G.~R{\"o}ger and M.~Helmert.
\newblock The more, the merrier: Combining heuristic estimators for satisficing planning.
\newblock In \emph{Proceedings of the International Conference on Automated Planning and Scheduling}, volume~20, pages 246--249, 2010.

\bibitem[Seipp et~al.(2017)Seipp, Pommerening, Sievers, and Helmert]{seipp2017downward}
J.~Seipp, F.~Pommerening, S.~Sievers, and M.~Helmert.
\newblock Downward lab, 2017.

\bibitem[Seipp et~al.(2020)Seipp, Keller, and Helmert]{seipp2020saturated}
J.~Seipp, T.~Keller, and M.~Helmert.
\newblock Saturated cost partitioning for optimal classical planning.
\newblock \emph{Journal of Artificial Intelligence Research}, 67:\penalty0 129--167, 2020.

\bibitem[Singh et~al.(2021{\natexlab{a}})Singh, Lipovetzky, Ramirez, and Segovia-Aguas]{singh2021approximate}
A.~Singh, N.~Lipovetzky, M.~Ramirez, and J.~Segovia-Aguas.
\newblock Approximate novelty search.
\newblock In \emph{Proceedings of the International Conference on Automated Planning and Scheduling}, volume~31, pages 349--357, 2021{\natexlab{a}}.

\bibitem[Singh et~al.(2021{\natexlab{b}})Singh, Lipovetzky, Ramirez, and Segovia-Aguas]{singh2021technical}
A.~Singh, N.~Lipovetzky, M.~Ramirez, and J.~Segovia-Aguas.
\newblock Approximate novelty search - technical appendix, 2021{\natexlab{b}}.

\bibitem[Strehl and Littman(2008)]{strehl2008analysis}
A.~L. Strehl and M.~L. Littman.
\newblock An analysis of model-based interval estimation for {M}arkov decision processes.
\newblock \emph{Journal of Computer and System Sciences}, 74\penalty0 (8):\penalty0 1309--1331, 2008.

\bibitem[Taitler et~al.(2024)Taitler, Alford, Espasa, Behnke, Fi{\v{s}}er, Gimelfarb, Pommerening, Sanner, Scala, Schreiber, et~al.]{ipc2023}
A.~Taitler, R.~Alford, J.~Espasa, G.~Behnke, D.~Fi{\v{s}}er, M.~Gimelfarb, F.~Pommerening, S.~Sanner, E.~Scala, D.~Schreiber, et~al.
\newblock The 2023 international planning competition, 2024.

\end{thebibliography}

\appendix
\section{Experimental details}


\paragraph{Hybrid configurations}
We set the memory threshold for the frontend BFNoS solver by performing memory usage measurements directly in LAPKT \citep{lapkt}, to then fallback to the backend solver. For the BFNoS-Dual solver, the fallback planner is ran as part of the same process in LAPKT. For the BFNoS-LAMA and BFNoS-Maidu-$h^2$ solvers, a script runs the BFNoS frontend first in LAPKT. When a threshold is reached, the LAPKT process is halted sending a suitable signal, such that the script may then run the backend. We pay special attention to ensure that the appropriate time limit is given to the backend process, to ensure that the overall dual configuration does not exceed the global 1800 \textit{sec} and 8 \textit{GB} limits when running our experiments with Downward Lab \citep{seipp2017downward}.

\paragraph{Multiple measurements} Multiple measurements were conducted for variants implementing trimmed open lists, as well as for the Approximate-BFWS solver \citep{singh2021technical}. We always selected the same set of seeds \{0,1,2,3,4\} over 5 measurements to avoid bias.

\paragraph{IPC benchmark domains for Approximate-BFWS}
In order to perform experiments over the full set of benchmark problems with Approximate-BFWS configuration using the Tarski grounder \cite{tarski:github:18}, we modified the description of the IPC domains \textit{Storage}, \textit{Tidybot}, and \textit{GED}, according to the changes described in \citet{singh2021technical}. 

\paragraph{First version of IPC2023 Scorpion-Maidu}
We run a "first" version of Scorpion Maidu \citep{correa-et-al-ipc2023c}, which halts after finding a solution rather than improving the plan, from the IPC-2023 branch of the code base \citep{maidu2023ipc} using the following command, as there is no explicit alias to directly run a "first" version of the planner:
\begin{verbatim}
--evaluator 'hlm=lmcount(
lm_factory=lm_reasonable_orders_hps(lm_rhw()),
transform=adapt_costs(one),pref=false)' 
--evaluator 
'hff=ff(transform=adapt_costs(one))' 
--search 'lazy(alt([single(hff), 
single(hff, pref_only=true),
single(hlm), single(hlm, pref_only=true), 
type_based([hff, g()]), 
novelty_open_list(novelty(width=2, 
consider_only_novel_states=true, 
reset_after_progress=True), 
break_ties_randomly=False, 
handle_progress=move)],
boost=1000),preferred=[hff,hlm], 
cost_type=one,reopen_closed=false)'
\end{verbatim}

\noindent This command is just for the planner, and does not include the $h^2$ preprocessing step.

\paragraph{Source code} 
Our implementation of the BFNoS solver is included in \citep{lapkt-bfnos}.

\section{Average Hamming distance measurement through simulation}

We simulate a search to obtain average Hamming distance values for newly generated nodes with respect to all nodes generated previously which are realistic and reflective of a true search tree. The underlying structural characteristics of planning state-spaces may vary significantly depending on the domain. Performing a simulated search allows us to simplify the environment as well as control certain characteristics of the environment so as to not bias it towards the structure of specific domains. Such control parameters include the number of nodes generated, the number of children nodes generated from each parent expansion, and the means of generating children nodes, which are generated from a parent node by selecting $J$ variables uniformly at random and changing their boolean values. 

We set the following parameters, which are representative of corresponding measures found in IPC satisficing benchmark domains: state size $L=100$, each parent state generates 4 children states, by changing the value of 3 boolean variables selected uniformly at random. The search starts with only one boolean vector, generated uniformly at random, in the open list, which gets removed from the open list and expanded, by generating its successors as described. Each newly generated node is inserted at the back of the open list. The new node to expand is then the first element of the open list, which then gets removed and the procedure repeated. Nodes which are expanded are inserted into a closed list. We proceed to generate 10000 nodes in this manner, and measure the average Hamming distance of each one of the last 100 generated nodes with respect to all other generated nodes, and average our expected average Hamming distance value across these measurements. The expected normalized Hamming distance $\alpha(s)$ is then obtained by dividing this value by the state size $L=100$. 

We obtained average Hamming distances of $\approx 35$, which results in an average normalized Hamming distance of $\frac{35}{100}=0.35$. We rounded this value down to $0.3$ to account for a slightly more pessimistic case than our simulation.

\section{Extended proofs}

\noindent \textbf{Theorem 4.}
\textit{Lower and upper bounds for $\alpha_{0:t}(n^c)$ are given by:}

\[\alpha_{0:t}(n^p)-\frac{t-1}{t}\frac{e}{L} \leq \alpha_{0:t}(n^c) \leq \alpha_{0:t}(n^p)+\frac{t-1}{t}\frac{e}{L}\]

\begin{proof}
When comparing the Hamming distances of $n^p$ and $n^c$ with respect to a third node $n'$, the greatest decrease in Hamming distances $e$, corresponding to all $e$ effects changing variables $v_i$ where $n^p(v_i)\neq n'(v_i)$ and $n^c(v_i)=n'(v_i)$. Thus in the lower bound we get that all $e$ effect variables change their corresponding valuation to match with all states in history except for the parent node, reducing Hamming distance to each state by 1 for each effect $e$. The upper bound is symmetric, and we take into account the fact that the distance of $n^c$ to $n^p$ is already accounted as the distance of $n^p$ to $n^c$ in $\alpha_{0:t}(n^p)$, since $n^c=n_t$, and it thus does not change. Since parent and child states share all variable valuations except for $e$ effects, which change valuation from parent to child node. This yields,  for all cases where $n'\in n_{0:t}$, $n'\neq n^c$ and $n'\neq n^p$:
\begin{equation} \label{Aeq2-1}
    \delta(n^c, n') \geq \frac{1}{L} (H(n^p, n') - e) = \delta(n^p, n') - \frac{e}{L}
\end{equation} 
\begin{equation} \label{Aeq2-2}
    \delta(n^c, n^p)=\frac{1}{L} (H(n^p, n^p) + e) = \frac{e}{L}
\end{equation}

We can redefine the average $\alpha_{0:t}(n^c)$ as 
\begin{equation} \label{Aeq2-3}
    \alpha_{0:t}(n^c)=\frac{1}{t}\bigg[\sum_{i=0;n_i \notin \{n^p,n^c\}}^t\Big(\delta(n^c, n_i)\Big) + \delta(n^c,n^p)\bigg]
\end{equation}
We have that, for $t-1$ comparisons, $\sum_{j=1}^{t-1} \alpha_{0:t-1}(n^p)= \sum_{j=1}^{t-1} \frac{1}{t-1} \sum_{i=0; n_i \neq n^p}^{t-1} \delta(n^p, n_i) = \sum_{i=0; n_i \neq n^p}^{t-1} \delta(n^p, n_i)$, therefore we can update the average for $n^p$ which includes node $n^c=n_t$:
\begin{equation} \label{Aeq2-4}
    \begin{split}
    \alpha_{0:t}(n^p)&=\frac{1}{t}\bigg[\sum_{j=1}^{t-1} \alpha_{0:t-1}(n^p)+\delta(n^c,n^p)\bigg] \\
    &=\frac{1}{t}\bigg[\sum_{i=0; n_i \neq n^p}^{t-1}\Big(\delta(n^p,n_i)\Big)+\delta(n^c,n^p)\bigg]
    \end{split}
\end{equation}
Substituting (\ref{Aeq2-1}) into (\ref{Aeq2-3}), noting that \(\sum_{i=0;n_i \notin \{n^p,n^c\}}^t(\frac{e}{L})=(t-1)\frac{e}{L}\), and that \(\sum_{i=0;n_i \notin \{n^p,n^c\}}^t\big(\delta(n^p, n_i)\big)=\sum_{i=0;n_i \notin \{n^p\}}^{t-1}\big(\delta(n^p, n_i)\big)\) since $n^c=n_t$, we obtain:
\begin{equation}
    \begin{split} \label{Aeq2-5}
    \alpha_{0:t}(n^c) &\geq \frac{1}{t}\bigg[\sum_{i=0;n_i \notin \{n^p,n^c\}}^t\Big(\delta(n^p, n_i) - \frac{e}{L}\Big) + \delta(n^c,n^p)\bigg] \\
    &=\frac{1}{t}\bigg[\sum_{i=0;n_i \notin \{n^p\}}^{t-1}\Big(\delta(n^p, n_i)\Big) + \delta(n^c,n^p)\bigg]-\frac{t-1}{t}\frac{e}{L}
    \end{split}
\end{equation}

Substituting (\ref{Aeq2-4}) into (\ref{Aeq2-5}) we obtain 
\begin{equation}
    \alpha_{0:t}(n^c)\geq\alpha_{0:t}(n^p)-\frac{t-1}{t}\frac{e}{L}
\end{equation}

For the upper bound, we note that it is symmetrical in that in the upper bound all effects \textit{e} are novel, that is for some effect variable $v_i$ we have that $n^c(v_i)\neq n'(v_i)$ for all $n'\in n_{0:t-1}$, thus we get $\delta(n^c,n')\leq\delta(n^p,n')+\frac{e}{L}$. Following the same procedure yields the upper bound.
\end{proof}

\begin{example}
For states of size $L=3$, we give a history of \textit{t} nodes, one of which the parent. All $t-1$ nodes that are not the parent node $n^p$ are represented by state vector [1,0,1]. Parent node $n^p$ is represented by [1,1,0]. For an action with 2 effects, $e=2$, knowing that the Hamming distance between parent and child node must be $e=2$ by assumption, then the greatest decrease occurs when child node $n^c$ also has value [1,0,1]. Parent node $n^p$ has average normalized Hamming distance of $\frac{2}{3}$ to all other nodes, where 3 is given by $L=3$. Child node $n^c$ has average normalized Hamming distance of $\frac{1}{t}\frac{2}{3}=\frac{t}{t}\frac{2}{3}-\frac{t-1}{t}\frac{2}{3}=\alpha_{0:t}(n^p)-\frac{t-1}{t}\frac{2}{L}$.
\end{example}

\noindent \textbf{Theorem 5.}
\textit{Upper bound $\alpha_{0:t}(n^c)$ with respect to $\mu = \mu_{t-1}^{min}(n^c)$ is given by:}
\[
\alpha_{0:t}(n^c)\leq\alpha_{0:t}(n^p)+\frac{t-1}{t}\frac{e-2e\mu}{L}
\]

\begin{proof}
We seek to maximize the Hamming distance of the child node with respect to its parent by minimizing the number of nodes in history $n'\in n_{0:t-1}$ where value $n^c(v_i)=n'(v_i)$ for effect variables $v_i$. Since $\mu$ is the minimum feature occurrence, this acts as a constraint, and the upper bound occurs when all effects $e$ have occurrence equal to the minimum occurrence $\mu$. Thus there are $(1-\mu)\cdot (t-1)$ nodes in which, for effect variables $v_i$, $n'(v_i)\neq n^c(v_i)$ and $n'(v_i)=n^p(v_i)$, and $\mu\cdot (t-1)$ nodes in which $n'(v_i)=n^c(v_i)$ and $n'(v_i)\neq n^p(v_i)$. Thus, for $t-1$ nodes $n'\in n_{0:t-1}$, $n'\neq n^p$, we have that $\delta(n^c, n')=\frac{1}{L} (H(n^p, n') + e)$ a total of $(1-\mu)\cdot (t-1)$ times, and $\delta(n^c, n')=\frac{1}{L} (H(n^p, n') - e)$ a total of $\mu\cdot (t-1)$ times.

The summation over all comparisons becomes:

\begin{equation} \label{Aeq2-7}
    \begin{split}
        &\sum_{i=0;n_i \notin \{n^p,n^c\}}^t\Big(\delta(n^c, n_i)\Big) \\
        &=\sum_{i=0;n_i \notin \{n^p,n^c\}}^t\Big(\delta(n^p, n_i)\Big)+(1-\mu)(t-1)\frac{e}{L} - \mu(t-1)\frac{e}{L} \\
        &=\sum_{i=0;n_i \notin \{n^p\}}^{t-1}\Big(\delta(n^p, n_i)\Big)+(t-1)\cdot \frac{e-2e\mu}{L}
    \end{split}
\end{equation}

Substituting (\ref{Aeq2-7}) into (\ref{Aeq2-3}) as in Theorem 4 we obtain:

\begin{equation} \label{Aeq2-8}
    \begin{split} 
    &\alpha_{0:t}(n^c) \\
    &\leq\frac{1}{t}\bigg[\sum_{i=0;n_i \notin \{n^p\}}^{t-1}\Big(\delta(n^p, n_i)\Big)+(t-1)\cdot \frac{e-2e\mu}{L} + \delta(n^c,n^p)\bigg] \\
    &=\frac{1}{t}\bigg[\sum_{i=0;n_i \notin \{n^p\}}^{t-1}\Big(\delta(n^p, n_i)\Big) + \delta(n^c,n^p)\bigg]+\frac{t-1}{t}\frac{e-2e\mu}{L}
    \end{split}
\end{equation}

Substituting (\ref{Aeq2-4}) into (\ref{Aeq2-8}) we obtain:
\[
\alpha_{0:t}(n^c)\leq\alpha_{0:t}(n^p)+\frac{t-1}{t}\frac{e-2e\mu}{L}
\]

\end{proof}

\noindent \textbf{Theorem 6.}
\textit{Lower bound for $\alpha_{0:t}(n^c)$ when $\mu_{t-1}^{min}(n^c)=0$ is given by:}
\[
\alpha_{0:t}(n^c) \geq \alpha_{0:t}(n^p)-\frac{t-1}{t}\frac{e-2}{L}
\]
\begin{proof}
In the lower bound, one effect of the action from parent to child node is constrained to be novel, resulting in a Hamming distance of $+1$ compared to the parent node, and $e-1$ effects match all previous history except $n^p$, resulting in a hamming distance of $-1$ compared to the parent. Thus we have that:
\begin{equation} \label{Aeq2-9}
    \begin{split}
        \delta(n^c, n') &= \frac{1}{L} (H(n^p, n') - (e-1) + 1) \\
        &= \delta(n^p, n') - \frac{e-2}{L}
    \end{split}
\end{equation}
Inserting (\ref{Aeq2-9}) into (\ref{Aeq2-3}) and following the derivation from Theorem 4 yields Theorem 6.
\end{proof}

\section{Extended results}

All measurements in this section are subject to an 8 \textit{GB} memory and 1800 \textit{sec} time limits.

\vspace{2mm}
\begin{table}[tbh]
  \centering
  \resizebox{\columnwidth}{!}{
  \begin{tabular}{l|ccccccc}
    \hline
    \rule{0pt}{2.6ex} & \textbf{$N\geq0$} & \textbf{$N\geq1$} & \textbf{$N\geq5$} & \textbf{$N\geq10$} & \textbf{$N\geq100$} & \textbf{$N\geq1000$} & \textbf{$N\geq10000$}\\ 
    \hline
    \rule{0pt}{2.7ex}\text{Generated} &100\%& 99.45\% & 97.68\% & 95.09\% & 74.61\% & 45.35\% & 28.26\% \\
    \text{Expanded} & \makecell{100\%} & \makecell{34.88\%} & \makecell{27.21\%} & \makecell{24.45\%} & \makecell{16.34\%} & \makecell{10.49\%} & \makecell{4.91\%}\\
    \hline
  \end{tabular}}
  \vspace{1mm}
  \caption{\% of instances across all IPC satisficing benchmarks where a node with count $\geq N$ was recorded across generated and expanded nodes by a $\text{BFCS}_t(f_5)$ planner. This includes unsolved instances.}
  \label{tab:reformatted}
\end{table}

\begin{figure}[tbh]
    \centering
    \begin{overpic}[width=0.6\linewidth]{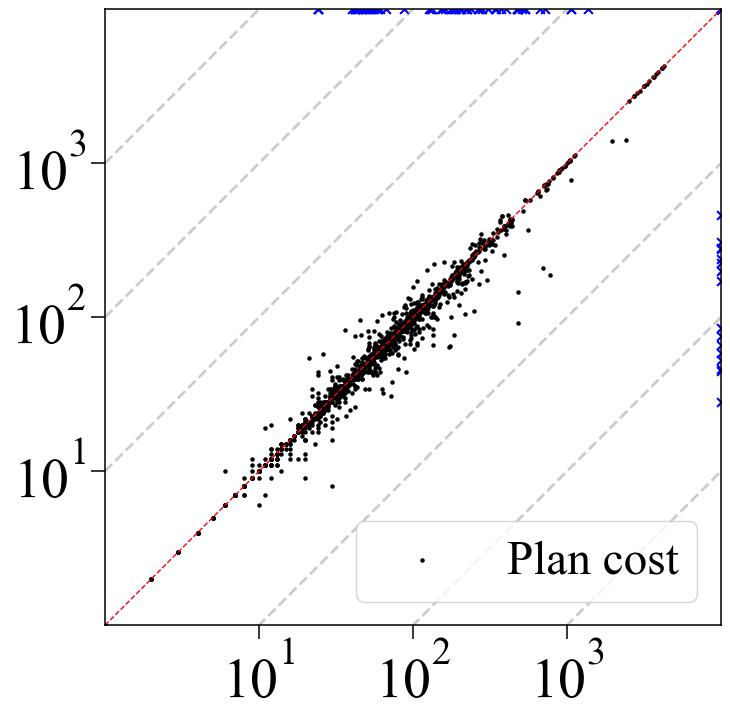}
        \put(55,-1){\makebox(0,0){BFNoS}} 
        \put(-1,55){\rotatebox{90}{\makebox(0,0){BFWS$_t(f_5)$}}} 
    \end{overpic}
    \vspace{3mm} 
    \caption{Plan cost over instances solved by $\text{BFNoS}$ and $\text{BFWS}_t(f_5)$. Blue crosses represent instances not solved by at least one planner.}
\end{figure}

\begin{figure}[tbh]
    \centering
    \includegraphics[width=0.8\linewidth]{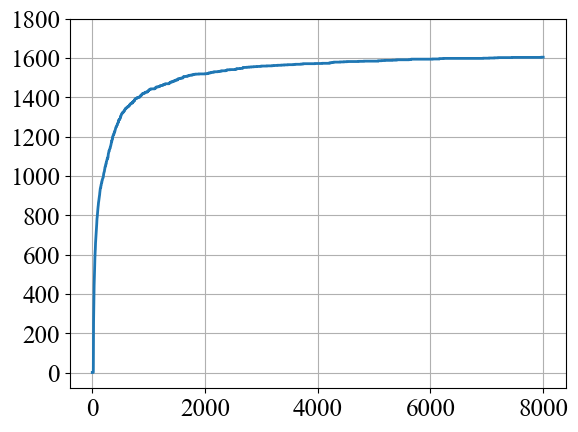}
    \caption{Instance coverage ($y$-axis) vs. memory usage (\textit{MB}) ($x$-axis) for BFNoS. The curve follows the diminishing returns on instances covered per memory usage claimed in Section~\ref{subsect:hybrid-solvers}. The problem set contains a total of 1831 instances.}
\end{figure}

\begin{figure}[t]
    \centering
    \begin{overpic}[width=0.6\linewidth]{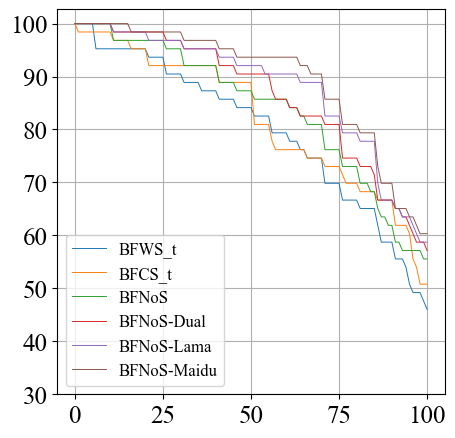}
        \put(55,-1){\makebox(0,0){\% instances solved}} 
        \put(0,48){\rotatebox{90}{\makebox(0,0){\% domains}}} 
    \end{overpic}
    \vspace{4mm}
    \caption{Coverage analysis of proposed solvers. Lines outline the \% of benchmark domains where the $\%$ of domain instances solved is $\geq$ a given value.}
\end{figure}

\clearpage

\begin{table*}[tbhp]
    \centering
    \begin{tabular}{|l|c|c|c|c|c|c|c|c|c|} \hline
        Domain & BFNoS & Dual- & Apx-BFWS &  LAMA- & Maidu &  Maidu &  BFNoS- & BFNoS- & BFNoS- \\
        & & BFWS & (Tarski) & First &  & with $h^2$ & Dual-back & LAMA & Maidu-${h^2}$\\ \hline
        agricola-sat18-strips &\makecell{15$\pm$0.0}&\makecell{12}&\makecell{{18$\pm$0.9}}&\makecell{12}&\makecell{12}&\makecell{13}&\makecell{15$\pm$0.5}&\makecell{15$\pm$0.5}&\makecell{15$\pm$0.5} \\ 
        airport &\makecell{{47$\pm$0.6}}&\makecell{46}&\makecell{{47$\pm$0.6}}&\makecell{34}&\makecell{38}&\makecell{45}&\makecell{46$\pm$0.6}&\makecell{46$\pm$0.5}&\makecell{46$\pm$0.6} \\
        assembly &\makecell{30$\pm$0.0}&\makecell{30}&\makecell{30$\pm$0.0}&\makecell{30}&\makecell{30}&\makecell{30}&\makecell{30$\pm$0.0}&\makecell{30$\pm$0.0}&\makecell{30$\pm$0.0} \\
        barman-sat14-strips &\makecell{20$\pm$0.0}&\makecell{20}&\makecell{20$\pm$0.0}&\makecell{20}&\makecell{20}&\makecell{20}&\makecell{20$\pm$0.0}&\makecell{20$\pm$0.0}&\makecell{20$\pm$0.0} \\ 
        blocks &\makecell{35$\pm$0.0}&\makecell{35}&\makecell{35$\pm$0.0}&\makecell{35}&\makecell{35}&\makecell{35}&\makecell{35$\pm$0.0}&\makecell{35$\pm$0.0}&\makecell{35$\pm$0.0} \\
        caldera-sat18-adl &\makecell{18$\pm$0.0}&\makecell{{19}}&\makecell{{19$\pm$0.6}}&\makecell{16}&\makecell{16}&\makecell{16}&\makecell{16$\pm$0.0}&\makecell{17$\pm$0.5}&\makecell{18$\pm$0.0} \\ 
        cavediving-14-adl &\makecell{{8$\pm$0.5}}&\makecell{{8}}&\makecell{{8$\pm$0.5}}&\makecell{7}&\makecell{7}&\makecell{7}&\makecell{{8$\pm$0.0}}&\makecell{{8$\pm$0.0}}&\makecell{{8$\pm$0.5}} \\
        childsnack-sat14-strips &\makecell{1$\pm$1.1}&\makecell{{9}}&\makecell{5$\pm$0.6}&\makecell{6}&\makecell{6}&\makecell{6}&\makecell{8$\pm$0.0}&\makecell{6$\pm$0.0}&\makecell{6$\pm$0.5} \\ 
        citycar-sat14-adl &\makecell{{20$\pm$0.0}}&\makecell{{20}}&\makecell{{20$\pm$0.0}}&\makecell{5}&\makecell{6}&\makecell{6}&\makecell{{20$\pm$0.0}}&\makecell{{20$\pm$0.0}}&\makecell{{20$\pm$0.0}} \\ 
        data-network-sat18-strips &\makecell{17$\pm$0.6}&\makecell{13}&\makecell{{19$\pm$0.5}}&\makecell{13}&\makecell{16}&\makecell{16}&\makecell{16$\pm$0.8}&\makecell{15$\pm$1.1}&\makecell{16$\pm$0.8} \\ 
        depot &\makecell{{22$\pm$0.0}}&\makecell{{22}}&\makecell{{22$\pm$0.0}}&\makecell{20}&\makecell{{22}}&\makecell{{22}}&\makecell{{22$\pm$0.0}}&\makecell{{22$\pm$0.0}}&\makecell{{22$\pm$0.0}} \\ 
        driverlog &\makecell{20$\pm$0.0}&\makecell{20}&\makecell{20$\pm$0.0}&\makecell{20}&\makecell{20}&\makecell{20}&\makecell{20$\pm$0.0}&\makecell{20$\pm$0.0}&\makecell{20$\pm$0.0} \\
        elevators-sat11-strips &\makecell{20$\pm$0.0}&\makecell{20}&\makecell{20$\pm$0.0}&\makecell{20}&\makecell{20}&\makecell{20}&\makecell{20$\pm$0.0}&\makecell{20$\pm$0.0}&\makecell{20$\pm$0.0} \\ 
        flashfill-sat18-adl &\makecell{14$\pm$1.3}&\makecell{{17}}&\makecell{15$\pm$1.6}&\makecell{14}&\makecell{15}&\makecell{14}&\makecell{{17$\pm$0.5}}&\makecell{16$\pm$0.6}&\makecell{16$\pm$0.9} \\
        floortile-sat14-strips &\makecell{2$\pm$0.5}&\makecell{2}&\makecell{2$\pm$0.0}&\makecell{2}&\makecell{2}&\makecell{{20}}&\makecell{2$\pm$0.0}&\makecell{2$\pm$0.0}&\makecell{{20$\pm$0.0}} \\ 
        folding &\makecell{9$\pm$0.0}&\makecell{5}&\makecell{5$\pm$0.5}&\makecell{{11}}&\makecell{{11}}&\makecell{{11}}&\makecell{9$\pm$0.0}&\makecell{9$\pm$0.0}&\makecell{9$\pm$0.0} \\
        freecell &\makecell{{80$\pm$0.0}}&\makecell{{80}}&\makecell{{80$\pm$0.0}}&\makecell{79}&\makecell{{80}}&\makecell{{80}}&\makecell{{80$\pm$0.0}}&\makecell{{80$\pm$0.}0}&\makecell{{80$\pm$0.0}} \\ 
        ged-sat14-strips &\makecell{20$\pm$0.0}&\makecell{20}&\makecell{20$\pm$0.0}&\makecell{20}&\makecell{20}&\makecell{20}&\makecell{20$\pm$0.0}&\makecell{20$\pm$0.0}&\makecell{20$\pm$0.0} \\
        grid &\makecell{5$\pm$0.0}&\makecell{5}&\makecell{5$\pm$0.0}&\makecell{5}&\makecell{5}&\makecell{5}&\makecell{5$\pm$0.0}&\makecell{5$\pm$0.0}&\makecell{5$\pm$0.0} \\ 
        gripper &\makecell{20$\pm$0.0}&\makecell{20}&\makecell{20$\pm$0.0}&\makecell{20}&\makecell{20}&\makecell{20}&\makecell{20$\pm$0.0}&\makecell{20$\pm$0.0}&\makecell{20$\pm$0.0} \\ 
        hiking-sat14-strips &\makecell{{20$\pm$0.0}}&\makecell{18}&\makecell{{20$\pm$0.0}}&\makecell{{20}}&\makecell{{20}}&\makecell{{20}}&\makecell{{20$\pm$0.0}}&\makecell{{20$\pm$0.0}}&\makecell{{20$\pm$0.0}} \\ 
        labyrinth &\makecell{15$\pm$0.5}&\makecell{5}&\makecell{{18$\pm$0.5}}&\makecell{1}&\makecell{0}&\makecell{2}&\makecell{15$\pm$0.5}&\makecell{15$\pm$0.5}&\makecell{15$\pm$0.5} \\ 
        logistics00 &\makecell{28$\pm$0.0}&\makecell{28}&\makecell{28$\pm$0.0}&\makecell{28}&\makecell{28}&\makecell{28}&\makecell{28$\pm$0.0}&\makecell{28$\pm$0.0}&\makecell{28$\pm$0.0} \\ 
        maintenance-sat14-adl &\makecell{{17$\pm$0.0}}&\makecell{{17}}&\makecell{{17$\pm$0.0}}&\makecell{11}&\makecell{13}&\makecell{13}&\makecell{{17$\pm$0.0}}&\makecell{{17$\pm$0.0}}&\makecell{{17$\pm$0.0}} \\ 
        miconic &\makecell{150$\pm$0.0}&\makecell{150}&\makecell{150$\pm$0.0}&\makecell{150}&\makecell{150}&\makecell{150}&\makecell{150$\pm$0.0}&\makecell{150$\pm$0.0}&\makecell{150$\pm$0.0} \\ 
        movie &\makecell{30$\pm$0.0}&\makecell{30}&\makecell{30$\pm$0.0}&\makecell{30}&\makecell{30}&\makecell{30}&\makecell{30$\pm$0.0}&\makecell{30$\pm$0.0}&\makecell{30$\pm$0.0} \\ 
        mprime &\makecell{35$\pm$0.0}&\makecell{35}&\makecell{35$\pm$0.0}&\makecell{35}&\makecell{35}&\makecell{35}&\makecell{35$\pm$0.0}&\makecell{35$\pm$0.0}&\makecell{35$\pm$0.0} \\ 
        mystery &\makecell{18$\pm$0.6}&\makecell{{19}}&\makecell{{19$\pm$0.0}}&\makecell{{19}}&\makecell{{19}}&\makecell{{19}}&\makecell{{19$\pm$0.0}}&\makecell{{19$\pm$0.0}}&\makecell{{19$\pm$0.0}} \\ 
        nomystery-sat11-strips &\makecell{14$\pm$0.8}&\makecell{{19}}&\makecell{14$\pm$0.5}&\makecell{11}&\makecell{{19}}&\makecell{18}&\makecell{{19$\pm$0.0}}&\makecell{15$\pm$0.6}&\makecell{17$\pm$0.0} \\ 
        nurikabe-sat18-adl &\makecell{16$\pm$0.6}&\makecell{14}&\makecell{17$\pm$0.5}&\makecell{9}&\makecell{11}&\makecell{16}&\makecell{17$\pm$0.6}&\makecell{17$\pm$0.0}&\makecell{{18$\pm$0.0}} \\ 
        openstacks-sat14-strips &\makecell{20$\pm$0.0}&\makecell{20}&\makecell{20$\pm$0.6}&\makecell{20}&\makecell{20}&\makecell{20}&\makecell{20$\pm$0.0}&\makecell{20$\pm$0.0}&\makecell{20$\pm$0.0} \\ 
        organic-synthesis-split-sat18-strips &\makecell{8$\pm$0.5}&\makecell{12}&\makecell{8$\pm$0.0}&\makecell{{14}}&\makecell{{14}}&\makecell{{14}}&\makecell{11$\pm$0.5}&\makecell{{14$\pm$0.0}}&\makecell{{14$\pm$0.9}} \\ 
        parcprinter-sat11-strips &\makecell{9$\pm$0.6}&\makecell{16}&\makecell{11$\pm$1.3}&\makecell{{20}}&\makecell{{20}}&\makecell{{20}}&\makecell{{20$\pm$0.0}}&\makecell{{20$\pm$0.0}}&\makecell{{20$\pm$0.0}} \\ 
        parking-sat14-strips &\makecell{20$\pm$0.0}&\makecell{20}&\makecell{20$\pm$0.0}&\makecell{20}&\makecell{20}&\makecell{20}&\makecell{20$\pm$0.0}&\makecell{20$\pm$0.0}&\makecell{20$\pm$0.0} \\ 
        pathways &\makecell{26$\pm$0.9}&\makecell{{30}}&\makecell{28$\pm$1.1}&\makecell{23}&\makecell{25}&\makecell{25}&\makecell{{30$\pm$0.0}}&\makecell{27$\pm$0.8}&\makecell{27$\pm$0.7} \\ 
        pegsol-sat11-strips &\makecell{20$\pm$0.0}&\makecell{20}&\makecell{20$\pm$0.0}&\makecell{20}&\makecell{20}&\makecell{20}&\makecell{20$\pm$0.0}&\makecell{20$\pm$0.0}&\makecell{20$\pm$0.0} \\ 
        pipesworld-notankage &\makecell{{50$\pm$0.0}}&\makecell{{50}}&\makecell{{50$\pm$0.0}}&\makecell{43}&\makecell{45}&\makecell{45}&\makecell{50$\pm$0.0}&\makecell{50$\pm$0.0}&\makecell{50$\pm$0.0} \\ 
        pipesworld-tankage &\makecell{43$\pm$1.6}&\makecell{42}&\makecell{{44$\pm$0.6}}&\makecell{43}&\makecell{43}&\makecell{43}&\makecell{43$\pm$0.8}&\makecell{43$\pm$0.5}&\makecell{43$\pm$0.6} \\ 
        psr-small &\makecell{50$\pm$0.0}&\makecell{50}&\makecell{50$\pm$0.0}&\makecell{50}&\makecell{50}&\makecell{50}&\makecell{50$\pm$0.0}&\makecell{50$\pm$0.0}&\makecell{50$\pm$0.0} \\ 
        quantum-layout &\makecell{20$\pm$0.0}&\makecell{20}&\makecell{20$\pm$0.0}&\makecell{20}&\makecell{20}&\makecell{20}&\makecell{20$\pm$0.0}&\makecell{20$\pm$0.0}&\makecell{20$\pm$0.0} \\ 
        recharging-robots &\makecell{{14$\pm$0.6}}&\makecell{12}&\makecell{{14$\pm$0.8}}&\makecell{13}&\makecell{13}&\makecell{13}&\makecell{{14$\pm$0.5}}&\makecell{{14$\pm$0.0}}&\makecell{{14$\pm$0.5}} \\ 
        ricochet-robots &\makecell{{20$\pm$0.5}}&\makecell{{20}}&\makecell{18$\pm$0.6}&\makecell{14}&\makecell{18}&\makecell{18}&\makecell{{20$\pm$0.0}}&\makecell{{20$\pm$0.0}}&\makecell{{20$\pm$0.0}} \\ 
        rovers &\makecell{40$\pm$0.4}&\makecell{40}& \makecell{40$\pm$0.4}&\makecell{{40}}&\makecell{{40}}&\makecell{{40}}&\makecell{40$\pm$0.4}&\makecell{{40$\pm$0.0}}&\makecell{40$\pm$0.0} \\ 
        rubiks-cube &\makecell{5$\pm$0.0}&\makecell{6}& \makecell{5$\pm$0.6}&\makecell{{20}}&\makecell{{20}}&\makecell{{20}}&\makecell{5$\pm$0.0}&\makecell{{20$\pm$0.0}}&\makecell{16$\pm$0.6} \\ 
        satellite &\makecell{34$\pm$0.8}&\makecell{33}& \makecell{34$\pm$0.5}&\makecell{{36}}&\makecell{{36}}&\makecell{{36}}&\makecell{34$\pm$0.6}&\makecell{35$\pm$0.0}&\makecell{35$\pm$0.0} \\ 
        scanalyzer-sat11-strips &\makecell{20$\pm$0.0}&\makecell{20}& \makecell{20$\pm$0.5}&\makecell{20}&\makecell{20}&\makecell{20}&\makecell{20$\pm$0.0}&\makecell{20$\pm$0.0}&\makecell{20$\pm$0.0} \\ 
        schedule &\makecell{149$\pm$1.3}&\makecell{{150}}&  \makecell{149$\pm$1.3}&\makecell{{150}}&\makecell{{150}}&\makecell{{150}}&\makecell{149$\pm$0.7}&\makecell{{150$\pm$0.0}}&\makecell{{150$\pm$0.0}} \\ 
        settlers-sat18-adl &\makecell{13$\pm$1.5}&\makecell{7}& \makecell{12$\pm$0.7}&\makecell{17}&\makecell{{18}}&\makecell{{18}}&\makecell{12$\pm$0.5}&\makecell{17$\pm$0.0}&\makecell{17$\pm$0.5} \\ 
        slitherlink &\makecell{{5$\pm$0.6}}&\makecell{{5}}& \makecell{{5$\pm$0.7}}&\makecell{0}&\makecell{0}&\makecell{0}&\makecell{{5$\pm$0.5}}&\makecell{3$\pm$0.6}&\makecell{4$\pm$0.7} \\ 
        snake-sat18-strips &\makecell{{20$\pm$0.0}}&\makecell{17}& \makecell{{20$\pm$0.0}}&\makecell{5}&\makecell{14}&\makecell{14}&\makecell{{20$\pm$0.0}}&\makecell{{20$\pm$0.0}}&\makecell{{20$\pm$0.0}} \\ 
        sokoban-sat11-strips &\makecell{15$\pm$1.1}&\makecell{17}& \makecell{14$\pm$0.9}&\makecell{19}&\makecell{19}&\makecell{{20}}&\makecell{15$\pm$0.5}&\makecell{19$\pm$0.0}&\makecell{{20$\pm$0.0}} \\ 
        spider-sat18-strips &\makecell{17$\pm$1.3}&\makecell{16}& \makecell{16$\pm$1.1}&\makecell{16}&\makecell{16}&\makecell{17}&\makecell{{18$\pm$0.0}}&\makecell{{18$\pm$0.0}}&\makecell{{18$\pm$0.9}} \\ 
        storage &\makecell{{30$\pm$0.5}}&\makecell{29}& \makecell{{30$\pm$0.0}}&\makecell{20}&\makecell{25}&\makecell{25}&\makecell{29$\pm$0.5}&\makecell{29$\pm$0.0}&\makecell{29$\pm$0.6} \\ 
        termes-sat18-strips &\makecell{10$\pm$0.8}&\makecell{10}& \makecell{5$\pm$1.5}&\makecell{{16}}&\makecell{14}&\makecell{14}&\makecell{10$\pm$0.5}&\makecell{14$\pm$0.0}&\makecell{14$\pm$0.0} \\ 
        tetris-sat14-strips &\makecell{{20$\pm$0.0}}&\makecell{17}& \makecell{{20$\pm$0.0}}&\makecell{16}&\makecell{17}&\makecell{20}&\makecell{{20$\pm$0.0}}&\makecell{{20$\pm$0.0}}&\makecell{{20$\pm$0.0}} \\ 
        thoughtful-sat14-strips &\makecell{{20$\pm$0.0}}&\makecell{{20}}& \makecell{{20$\pm$0.2}}&\makecell{15}&\makecell{19}&\makecell{19}&\makecell{{20$\pm$0.0}}&\makecell{{20$\pm$0.0}}&\makecell{{20$\pm$0.0}} \\ 
        tidybot-sat11-strips &\makecell{{20$\pm$0.0}}&\makecell{18}& \makecell{{20$\pm$0.2}}&\makecell{17}&\makecell{{20}}&\makecell{{20}}&\makecell{{20$\pm$0.0}}&\makecell{{20$\pm$0.0}}&\makecell{{20$\pm$0.0}} \\ 
        tpp &\makecell{30$\pm$0.5}&\makecell{30}& \makecell{30$\pm$0.3}&\makecell{30}&\makecell{30}&\makecell{30}&\makecell{30$\pm$0.0}&\makecell{30$\pm$0.0}&\makecell{30$\pm$0.0} \\
        transport-sat14-strips &\makecell{{20$\pm$0.0}}&\makecell{{20}}& \makecell{{20$\pm$0.2}}&\makecell{17}&\makecell{18}&\makecell{16}&\makecell{{20$\pm$0.5}}&\makecell{{20$\pm$0.0}}&\makecell{{20$\pm$0.0}} \\
        trucks-strips &\makecell{8$\pm$0.8}&\makecell{19}& \makecell{13$\pm$1.5}&\makecell{18}&\makecell{20}&\makecell{{22}}&\makecell{17$\pm$0.5}&\makecell{16$\pm$0.0}&\makecell{20$\pm$0.0} \\ 
        visitall-sat14-strips &\makecell{20$\pm$0.0}&\makecell{20}& \makecell{20$\pm$0.0}&\makecell{20}&\makecell{20}&\makecell{20}&\makecell{20$\pm$0.0}&\makecell{20$\pm$0.0}&\makecell{20$\pm$0.0} \\ 
        woodworking-sat11-strips &\makecell{{20$\pm$0.0}}&\makecell{{20}}& \makecell{12$\pm$1.1}&\makecell{{20}}&\makecell{{20}}&\makecell{{20}}&\makecell{{20$\pm$0.0}}&\makecell{{20$\pm$0.0}}&\makecell{{20$\pm$0.0}} \\ 
        zenotravel &\makecell{20$\pm$0.0}&\makecell{20}& \makecell{20$\pm$0.0}&\makecell{20}&\makecell{20}&\makecell{20}&\makecell{20$\pm$0.0}&\makecell{20$\pm$0.0}&\makecell{20$\pm$0.0} \\ \hline
        \textbf{Coverage (1831)} &\makecell{1600$\pm$3.9}& \makecell{1603}& \makecell{1606$\pm$3.9}& \makecell{1535}& \makecell{1590}& \makecell{1626}& \makecell{1641$\pm$1.9} &\makecell{1662$\pm$2.3} &\makecell{{1688$\pm$3.3}}\\ \hline
        \textbf{\% Score (100\%)} &\makecell{83.32\%\\$\pm$0.18}& \makecell{83.23\%}& \makecell{83.51\%\\$\pm$0.27}& \makecell{79.06\%}& \makecell{82.84\%}& \makecell{85.31\%}& \makecell{86.23\%\\$\pm$0.09}& \makecell{87.87\%\\$\pm$0.17} &\makecell{{89.79\%}\\{$\pm$0.22}}\\ \hline
        \textbf{Front-end \% coverage share} &\makecell{-}& \makecell{-}& \makecell{-}& \makecell{-}& \makecell{-}& \makecell{-}&\makecell{97\%} &\makecell{96\%} &\makecell{94\%}\\ \hline
    \end{tabular}
    \vspace{2mm}
    \caption{Comparative performance analysis across the full set of benchmark domains. \textit{\% score} is the average of the \% of instances solved in each domain. \textit{Front-end \% coverage share} refers to the \% of covered instances solved by the BFNoS front-end. Values for BFNoS variants and Approximate-BFWS represent the mean and include the standard deviation across 5 measurements.}
    \label{tab:comparative_performance_large}
\end{table*}

\begin{table*}[tbhp]
    \centering
    \begin{tabular}{|l|c|c|c|c|} \hline
        \rule{0pt}{8pt}Domain & BFNoS- &  BFNoS- & BFNoS- & BFNoS- \\
        & Dual-back$_{MO}$ & LAMA$_{MO}$ & Maidu$_{MO}$ & Maidu-${h^2}$$_{MO}$\\ \hline     
        agricola-sat18-strips &\makecell{15$\pm$0.0}&\makecell{15$\pm$0.0}&\makecell{15$\pm$0.0}&\makecell{15$\pm$0.5} \\ 
        airport &\makecell{47$\pm$0.6}&\makecell{47$\pm$0.6}&\makecell{47$\pm$0.6}&\makecell{47$\pm$0.6} \\
        assembly &\makecell{30$\pm$0.0}&\makecell{30$\pm$0.0}&\makecell{30$\pm$0.0}&\makecell{30$\pm$0.0} \\
        barman-sat14-strips &\makecell{20$\pm$0.0}&\makecell{20$\pm$0.0}&\makecell{20$\pm$0.0}&\makecell{20$\pm$0.0} \\ 
        blocks &\makecell{35$\pm$0.0}&\makecell{35$\pm$0.0}&\makecell{35$\pm$0.0}&\makecell{35$\pm$0.0} \\
        caldera-sat18-adl &\makecell{16$\pm$0.0}&\makecell{17$\pm$0.0}&\makecell{17$\pm$0.0}&\makecell{18$\pm$0.0} \\ 
        cavediving-14-adl &\makecell{8$\pm$0.0}&\makecell{{8$\pm$0.6}}&\makecell{{8$\pm$0.6}}&\makecell{{8$\pm$0.6}} \\
        childsnack-sat14-strips &\makecell{4$\pm$0.5}&\makecell{4$\pm$0.5}&\makecell{4$\pm$0.5}&\makecell{4$\pm$0.6} \\ 
        citycar-sat14-adl &\makecell{20$\pm$0.0}&\makecell{{20$\pm$0.0}}&\makecell{{20$\pm$0.0}}&\makecell{{20$\pm$0.0}} \\ 
        data-network-sat18-strips &\makecell{16$\pm$0.7}&\makecell{16$\pm$0.6}&\makecell{17$\pm$0.6}&\makecell{16$\pm$0.6} \\ 
        depot &\makecell{{22$\pm$0.0}}&\makecell{{22$\pm$0.0}}&\makecell{{22$\pm$0.0}}&\makecell{{22$\pm$0.0}} \\ 
        driverlog &\makecell{20$\pm$0.0}&\makecell{20$\pm$0.0}&\makecell{20$\pm$0.0}&\makecell{20$\pm$0.0} \\
        elevators-sat11-strips &\makecell{20$\pm$0.0}&\makecell{20$\pm$0.0}&\makecell{20$\pm$0.0}&\makecell{20$\pm$0.0} \\ 
        flashfill-sat18-adl &\makecell{17$\pm$0.5}&\makecell{{16$\pm$0.0}}&\makecell{15$\pm$0.5}&\makecell{15$\pm$0.5} \\
        floortile-sat14-strips &\makecell{{2$\pm$0.0}}&\makecell{2$\pm$0.0}&\makecell{2$\pm$0.0}&\makecell{{20$\pm$0.0}} \\ 
        folding &\makecell{{9$\pm$0.0}}&\makecell{10$\pm$1.0}&\makecell{10$\pm$0.6}&\makecell{9$\pm$0.6} \\
        freecell &\makecell{{80$\pm$0.0}}&\makecell{{80$\pm$0.0}}&\makecell{{80$\pm$0.}0}&\makecell{{80$\pm$0.0}} \\ 
        ged-sat14-strips &\makecell{20$\pm$0.0}&\makecell{20$\pm$0.0}&\makecell{20$\pm$0.0}&\makecell{20$\pm$0.0} \\
        grid &\makecell{5$\pm$0.0}&\makecell{5$\pm$0.0}&\makecell{5$\pm$0.0}&\makecell{5$\pm$0.0} \\ 
        gripper &\makecell{20$\pm$0.0}&\makecell{20$\pm$0.0}&\makecell{20$\pm$0.0}&\makecell{20$\pm$0.0} \\ 
        hiking-sat14-strips &\makecell{{20$\pm$0.0}}&\makecell{{20$\pm$0.0}}&\makecell{{20$\pm$0.0}}&\makecell{{20$\pm$0.0}} \\ 
        labyrinth &\makecell{15$\pm$0.5}&\makecell{15$\pm$0.5}&\makecell{15$\pm$0.5}&\makecell{15$\pm$0.5} \\ 
        logistics00 &\makecell{28$\pm$0.0}&\makecell{28$\pm$0.0}&\makecell{28$\pm$0.0}&\makecell{28$\pm$0.0} \\ 
        maintenance-sat14-adl &\makecell{17$\pm$0.0}&\makecell{{17$\pm$0.0}}&\makecell{{17$\pm$0.0}}&\makecell{{17$\pm$0.0}} \\ 
        miconic &\makecell{150$\pm$0.0}&\makecell{150$\pm$0.0}&\makecell{150$\pm$0.0}&\makecell{150$\pm$0.0} \\ 
        movie &\makecell{30$\pm$0.0}&\makecell{30$\pm$0.0}&\makecell{30$\pm$0.0}&\makecell{30$\pm$0.0} \\ 
        mprime &\makecell{35$\pm$0.0}&\makecell{35$\pm$0.0}&\makecell{35$\pm$0.0}&\makecell{35$\pm$0.0} \\ 
        mystery &\makecell{{19$\pm$0.0}}&\makecell{{19$\pm$0.0}}&\makecell{{19$\pm$0.0}}&\makecell{{19$\pm$0.0}} \\ 
        nomystery-sat11-strips &\makecell{19$\pm$0.0}&\makecell{{14$\pm$0.8}}&\makecell{17$\pm$0.0}&\makecell{17$\pm$0.0} \\ 
        nurikabe-sat18-adl &\makecell{16$\pm$0.5}&\makecell{16$\pm$0.6}&\makecell{16$\pm$0.6}&\makecell{{16$\pm$0.6}} \\ 
        openstacks-sat14-strips &\makecell{20$\pm$0.0}&\makecell{20$\pm$0.0}&\makecell{20$\pm$0.0}&\makecell{20$\pm$0.0} \\ 
        organic-synthesis-split-sat18-strips &\makecell{{11$\pm$0.7}}&\makecell{14$\pm$0.6}&\makecell{{13$\pm$0.5}}&\makecell{{13$\pm$1.3}} \\ 
        parcprinter-sat11-strips &\makecell{{20$\pm$0.0}}&\makecell{{20$\pm$0.0}}&\makecell{{20$\pm$0.0}}&\makecell{{20$\pm$0.0}} \\ 
        parking-sat14-strips &\makecell{20$\pm$0.0}&\makecell{20$\pm$0.0}&\makecell{20$\pm$0.0}&\makecell{20$\pm$0.0} \\ 
        pathways &\makecell{30$\pm$0.0}&\makecell{{26$\pm$0.9}}&\makecell{27$\pm$0.8}&\makecell{26$\pm$0.9} \\ 
        pegsol-sat11-strips &\makecell{20$\pm$0.0}&\makecell{20$\pm$0.0}&\makecell{20$\pm$0.0}&\makecell{20$\pm$0.0} \\ 
        pipesworld-notankage &\makecell{50$\pm$0.0}&\makecell{50$\pm$0.0}&\makecell{50$\pm$0.0}&\makecell{50$\pm$0.0} \\ 
        pipesworld-tankage &\makecell{43$\pm$1.1}&\makecell{42$\pm$1.5}&\makecell{42$\pm$1.5}&\makecell{42$\pm$1.5} \\ 
        psr-small &\makecell{50$\pm$0.0}&\makecell{50$\pm$0.0}&\makecell{50$\pm$0.0}&\makecell{50$\pm$0.0} \\ 
        quantum-layout &\makecell{20$\pm$0.0}&\makecell{20$\pm$0.0}&\makecell{20$\pm$0.0}&\makecell{20$\pm$0.0} \\ 
        recharging-robots &\makecell{14$\pm$0.6}&\makecell{{14$\pm$0.6}}&\makecell{{14$\pm$0.6}}&\makecell{{14$\pm$0.6}} \\ 
        ricochet-robots &\makecell{20$\pm$0.0}&\makecell{{20$\pm$0.6}}&\makecell{{20$\pm$0.0}}&\makecell{{20$\pm$0.0}} \\ 
        rovers &\makecell{{40$\pm$0.5}}&\makecell{40$\pm$0.5}&\makecell{{40$\pm$0.5}}&\makecell{40$\pm$0.5} \\ 
        rubiks-cube &\makecell{{5$\pm$0.0}}&\makecell{5$\pm$0.0}&\makecell{{5$\pm$0.0}}&\makecell{5$\pm$0.0} \\ 
        satellite &\makecell{{33$\pm$0.6}}&\makecell{34$\pm$1.1}&\makecell{35$\pm$0.0}&\makecell{34$\pm$1.0} \\ 
        scanalyzer-sat11-strips &\makecell{20$\pm$0.0}&\makecell{20$\pm$0.0}&\makecell{20$\pm$0.0}&\makecell{20$\pm$0.0} \\ 
        schedule &\makecell{{148$\pm$1.3}}&\makecell{148$\pm$1.3}&\makecell{{148$\pm$1.3}}&\makecell{{148$\pm$1.3}} \\ 
        settlers-sat18-adl &\makecell{{12$\pm$0.8}}&\makecell{16$\pm$0.9}&\makecell{16$\pm$0.9}&\makecell{16$\pm$0.6} \\ 
        slitherlink &\makecell{5$\pm$0.0}&\makecell{{4$\pm$0.7}}&\makecell{4$\pm$0.7}&\makecell{4$\pm$0.7} \\ 
        snake-sat18-strips &\makecell{20$\pm$0.0}&\makecell{{20$\pm$0.0}}&\makecell{{20$\pm$0.0}}&\makecell{{20$\pm$0.0}} \\ 
        sokoban-sat11-strips &\makecell{{15$\pm$0.6}}&\makecell{19$\pm$0.5}&\makecell{19$\pm$0.5}&\makecell{{20$\pm$0.0}} \\ 
        spider-sat18-strips &\makecell{18$\pm$0.0}&\makecell{{18$\pm$0.0}}&\makecell{{18$\pm$0.0}}&\makecell{{18$\pm$0.0}} \\ 
        storage &\makecell{29$\pm$0.6}&\makecell{29$\pm$0.6}&\makecell{29$\pm$0.5}&\makecell{29$\pm$0.6} \\ 
        termes-sat18-strips &\makecell{10$\pm$0.5}&\makecell{11$\pm$0.8}&\makecell{11$\pm$0.5}&\makecell{11$\pm$0.5} \\ 
        tetris-sat14-strips &\makecell{20$\pm$0.0}&\makecell{{20$\pm$0.0}}&\makecell{{20$\pm$0.0}}&\makecell{{20$\pm$0.0}} \\ 
        thoughtful-sat14-strips &\makecell{20$\pm$0.0}&\makecell{{20$\pm$0.0}}&\makecell{{20$\pm$0.0}}&\makecell{{20$\pm$0.0}} \\ 
        tidybot-sat11-strips &\makecell{{20$\pm$0.0}}&\makecell{{20$\pm$0.0}}&\makecell{{20$\pm$0.0}}&\makecell{{20$\pm$0.0}} \\ 
        tpp &\makecell{30$\pm$0.0}&\makecell{30$\pm$0.0}&\makecell{30$\pm$0.0}&\makecell{30$\pm$0.0} \\
        transport-sat14-strips &\makecell{20$\pm$0.0}&\makecell{{20$\pm$0.0}}&\makecell{{20$\pm$0.0}}&\makecell{{20$\pm$0.0}} \\
        trucks-strips &\makecell{{18$\pm$0.5}}&\makecell{16$\pm$0.6}&\makecell{19$\pm$0.5}&\makecell{20$\pm$0.9} \\ 
        visitall-sat14-strips &\makecell{20$\pm$0.0}&\makecell{20$\pm$0.0}&\makecell{20$\pm$0.0}&\makecell{20$\pm$0.0} \\ 
        woodworking-sat11-strips &\makecell{{20$\pm$0.0}}&\makecell{{20$\pm$0.0}}&\makecell{{20$\pm$0.0}}&\makecell{{20$\pm$0.0}} \\ 
        zenotravel &\makecell{20$\pm$0.0}&\makecell{20$\pm$0.0}&\makecell{20$\pm$0.0}&\makecell{20$\pm$0.0} \\ \hline
        \textbf{Coverage (1831)} & \makecell{1636$\pm$3.3}& \makecell{1638$\pm$4.9} &\makecell{1643$\pm$3.7} &\makecell{{1662$\pm$4.7}}\\ \hline
        \textbf{\% Score (100\%)} & \makecell{85.90\%$\pm$0.14}& \makecell{86.13\%$\pm$0.26}& \makecell{86.45\%$\pm$0.17} &\makecell{{88.02\%}{$\pm$0.21}}\\ \hline
        \textbf{Front-end \% coverage share} & \makecell{97\%}&\makecell{97\%} &\makecell{97\%} &\makecell{96\%}\\ \hline
    \end{tabular}
    \vspace{2mm}
    \caption{Comparative performance analysis across the full set of benchmark domains of `memory-threshold-only' dual-configuration BFNoS variants. \textit{\% score} is the average of the \% of instances solved in each domain. \textit{Front-end \% coverage share} refers to the \% of covered instances solved by the BFNoS front-end. Coverage values represent the mean and include the standard deviation across 5 measurements.}
    \label{tab:comparative_performance_large_2}
\end{table*}

\end{document}